\newtheorem{defi}{Definition}
\newtheorem{theo}{Theorem}
\newtheorem{lemma}{Lemma}
\begin{document}

%
%

\twocolumn[

\aistatstitle{CATS-Linear: Classification Auxiliary Linear Model for Time Series Forecasting}

\aistatsauthor{ Zipo Jibao, \quad Yingyi Fu \And  Xinyang Chen \And Guoting Chen}

\aistatsaddress{\{23s058023, 23s058019\}@stu.hit.edu.cn \\ School of Science, \\ Harbin Institute of Technology, \\ Shenzhen, China \And 21b358001@stu.hit.edu.cn \\ Harbin Institute of Technology, \\ Shenzhen, China \\ and University of Lille, France \And guoting.chen@univ-lille.fr \\ School of Science, \\ Great Bay University, \\ Dongguan, China } ]

\begin{abstract}
    Recent research demonstrates that linear models achieve forecasting performance competitive with complex architectures, yet methodologies for enhancing linear models remain underexplored. Motivated by the hypothesis that distinct time series instances may follow heterogeneous linear mappings, we propose the \textbf{C}lassification \textbf{A}uxiliary \textbf{T}rend-\textbf{S}easonal Decoupling \textbf{Linear} Model \textbf{CATS-Linear}, employing Classification Auxiliary Channel-Independence (CACI). CACI dynamically routes instances to dedicated predictors via classification, enabling supervised channel design. We further analyze the theoretical expected risks of different channel settings. Additionally, we redesign the trend-seasonal decomposition architecture by adding a decoupling---linear mapping---recoupling framework for trend components and complex-domain linear projections for seasonal components. Extensive experiments validate that CATS-Linear with fixed hyperparameters achieves state-of-the-art accuracy comparable to hyperparameter-tuned baselines while delivering SOTA accuracy against fixed-hyperparameter counterparts.
\end{abstract}

\section{INTRODUCTION}

Multivariate time series forecasting uses the lookback window $x \in R^{D\times L}$ to predict the horizon window $y \in R^{D\times H}$, where $D$ is the number of features in the series and $L/H$ is the lookback/horizon window size. Channel-mixing (CM) \citep{18, 23mts, han24}, which is also referred to as channel-dependence \citep{han24}, models all features of $x$ as a whole to forecast, while channel-independence (CI) \citep{16, 27} trains $D$ independent models for each feature and only utilizes the $i$-th feature of $x$ to predict the $i$-th feature of $y$, as illustrated in Figure \ref{intro-fig1}. One-channel (OC) method \citep{nbeats20, 16, xu24}, which is called the global univariate method in \citep{2023tide}, ignores the features' differences and trains a single univariate forecasting model for all different features. 

Channel-mixing learns dependency relationships between different features, which inherently leads to a large parameter count. This configuration may induce overfitting and consequently degrade performance \citep{liu24}. Channel-independent methods are more robust; however, when applied to high-dimensional data, they require training numerous separate models, resulting in substantial memory consumption.


\begin{figure}[ht]\rmfamily      
\begin{center}
\centerline{\includegraphics[width=0.95\columnwidth]{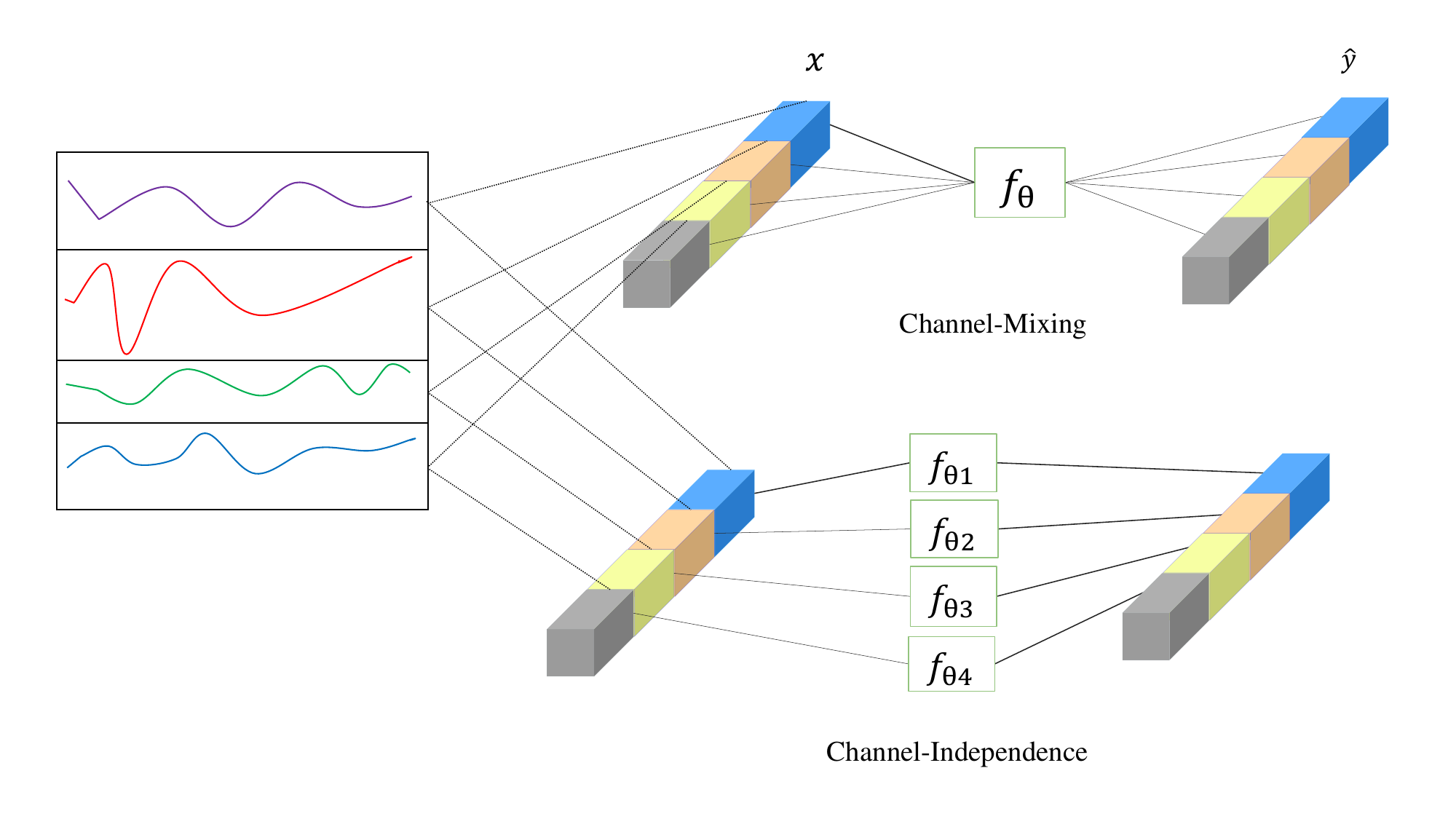}} %
\caption{Schematic illustration of the two mainstream channel designs.} 
\label{intro-fig1}
\end{center}
\end{figure}

The features of datasets such as ETTh1 and Electricity should be treated as independent samples rather than interdependent multivariate data. Given that the dependency relationships between input $x$ and output $y$ across different features are often similar, we find that one-channel methods may even surpass channel-independent methods in some cases when using DLinear \citep{16}. This makes us reconsider: Is treating time series features as channels necessary? Consequently, we propose Classification Auxiliary Channel-Independence (CACI), a new framework that supersedes the feature-as-channel paradigm. 

As depicted in Figure~\ref{frame}, CACI dynamically classifies series via a classifier, routing each series to respective predictors based on classification outcomes. Training a classifier requires category labels: we assign each series to the category whose corresponding predictor yields minimal prediction error, then utilize these labeled series to train the classifier. CCM \citep{chen24} computes cluster embeddings to cluster series and employs cross-attention to capture inter-channel dependencies, thereby realizing cluster-aware feedforward. Unlike CCM, whose clusters may misalign with prediction needs (e.g., grouping series requiring different predictors while separating those needing identical ones), our error-supervised design ensures prediction-relevant classification. This design not only enhances channel-prediction alignment but also reduces complexity from $\mathcal{O}(D)$ to $\mathcal{O}(1)$ with respect to $D$.

Seasonality-Trend decomposition enhances the accuracy of time series forecasting \citep{gardner1985, st90}. Autoformer employs moving averages for seasonal-trend decomposition \citep{23}, decomposing input sequences into trend component $t$ and seasonal component $s$. Specifically, given a known periodicity of 24, it derives $t$ via 24-step moving averaging and extracts $s$ by subtracting $t$ from the original series. DLinear separately applies linear mappings to both components and sums their predictions. However, this approach exhibits two limitations: (1) seasonal predictions utilize only temporal information while neglecting periodic emphasis; (2) temporal information within trend components remains entangled. To address these, we transform seasonal components into complex numbers, leveraging angular periodicity, perform complex-domain linear transformations, and then reconvert results to real seasonal predictions. For trend components, we decouple them into individual time-step states via the exponential smoothing method, apply linear mappings to these states, and finally recouple them into trend predictions. We designate the enhanced DLinear framework as TSLinear, which serves as the core predictor. Combined with the classifier, this integrated system constitutes CATS-Linear. The principal contributions of this work are:  

\begin{itemize}
\item We propose a novel channel design CACI, accompanied by a theoretical analysis of channel methods. CACI reduces parameter requirements of channel-independent approaches while boosting performance.
\item We refine DLinear's decomposition framework through enhanced seasonal-trend processing.
\item Experiments show that our hyperparameter-fixed model achieves SOTA comparable against hyperparameter-searching baselines and delivers an 8\% MSE reduction against unified hyperparameter baselines.
\end{itemize}

\section{RELATED WORK}

\textbf{Time Series Forecasting}. Classical time series forecasting models like ARIMA \citep{1968arima, box2015arima} leverage the stationarity of high-order differences, but exhibit limited capability in multi-step prediction. LSTM attempts to enhance RNN performance via gating mechanisms \citep{graves2012lstm, lai2018lstnet} since RNNs suffer from error accumulation in long-sequence forecasting. To address error accumulation in recurrent approaches, CNN-based models extract features and directly map them to the target space, with Temporal Convolutional Networks (TCN) specifically emphasizing the critical role of dilated convolutions in sequence forecasting \citep{borovykh2017conditional, franceschi2019unsupervised, luo2024moderntcn}.  In recent years, Transformer models \citep{li2019enhancing, liu2022pyraformer, wangcard} have gained prominence in time series forecasting. Concurrently, MLP methods have demonstrated competitive accuracy against established frameworks \citep{nbeats20, zhou22, liu2023koopa, ekambaram2023tsmixer, yi2024filternet}. Regarding emerging LLM-based approaches, some researchers question their forecasting efficacy \citep{tan2024language}. 

\textbf{Linear Model}. Distinct from the above models, linear models bypass feature extraction by adopting direct input-to-output mapping. Initially, LTSF-Linear demonstrates that linear models can surpass complex Transformer architectures, achieving state-of-the-art performance \citep{16}. Following this, a series of variants of the linear model were introduced \citep{li2024vlinear, wang2025clinear, genet2024temporal, ilbert2024analysing, rizvi2025bridging}. RLinear further validates the efficacy of linear models in forecasting periodic sequences \citep{li24}. FITS employs Fourier transforms to convert time series into the frequency domain, in which complex-valued linear mappings are subsequently applied \citep{xu24}. OLinear utilizes an adaptive orthogonal transformation matrix to encode and decode feature domains more efficiently, while introducing NormLin – a linear layer for replacing multi-head self-attention \citep{olinear2025}. Recent studies reveal that several existing linear variants are mathematically equivalent and collectively approximate linear regression \citep{Toner24}.

\section{METHODOLOGY}

\begin{figure*}[ht]\rmfamily      
\begin{center}
\centerline{\includegraphics[width=1.65\columnwidth]{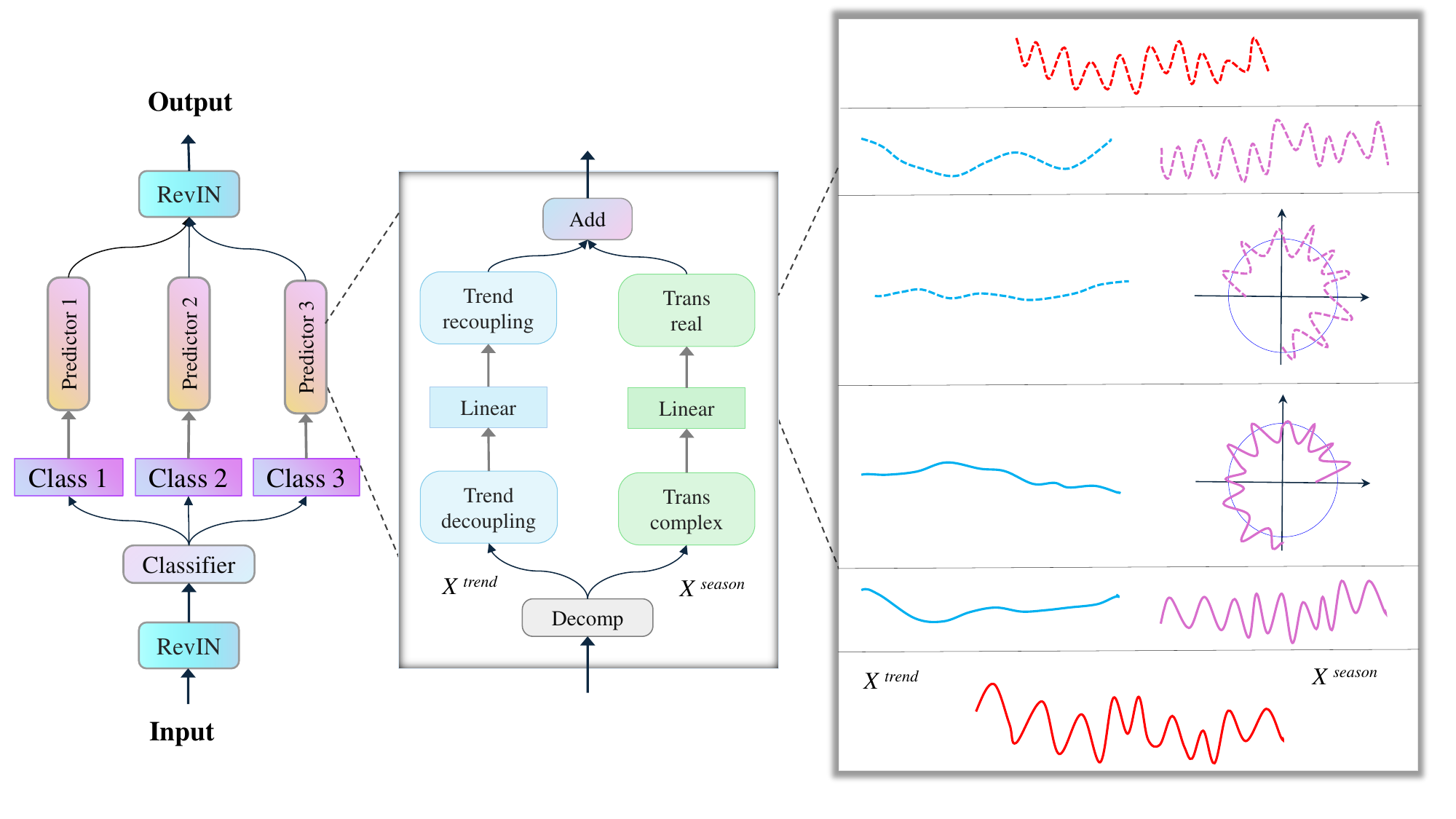}} %
\caption{Pipeline of CATS-Linear with TSLinear as predictors and RevIN as normalization method.} 
\label{frame}
\end{center}
\end{figure*}

Real-world time series exhibit significant distribution shifts. RevIN\citep{1} addresses shifts by first normalizing the input instance, eliminating its mean and variance information, and applying an affine transformation to ensure all input sequences have a mean of $\alpha \in R^D $ and a variance of $\beta \in R^D $. After prediction, the previously removed information is reintegrated into the final prediction. RevIN has been widely adopted as a module in advanced forecasting models \citep{27, liu24, li24}. For our model, RevIN additionally eliminates scale discrepancies between different features, thereby justifying the employment of a cross-channel forecasting setting.

\subsection{Classification Auxiliary Paradigm}
As illustrated in Figure \ref{frame}, CACI is a model-agnostic framework compatible with arbitrary forecasting models as predictors. CACI operates on the premise that different samples from the same time series may exhibit heterogeneous functional mappings from inputs to outputs. This heterogeneity diverges from channel-independence by not being determined by the dimension of the sequences. Thus, we classify sequences into distinct categories, training each category-specific predictor using samples in that category. For the Weather dataset, the classifier employs a four-layer network architecture. The initial three layers consist of 1D convolutional networks, each incorporating batch normalization and ReLU activation. The final layer processes features through a fully connected layer, followed by a softmax operation to output class probabilities. On other datasets, the classifier is a simpler two-layer MLP: the first layer applies tanh activation, while the second layer generates probability outputs via softmax. During training, the workflow follows Algorithm \ref{alg}. In the testing phase, to alleviate significant losses from misclassification, the final prediction is derived by weighting the outputs of individual predictors using the probability given by the classifier.

\subsection{Feature Decoupling Linear Model}
Periodicity is critical for accurate long-term time series forecasting, and linear models excel at capturing periodic patterns \citep{li24}. While existing linear approaches merely learn time dependency, we enhance periodicity information integration by converting seasonal components $s=(s_1, s_2,...s_p,...,s_L)$ into complex numbers as 
\begin{equation}
    z_p = s_p e^{jwp}, 1 \leq p\leq L,
\end{equation}
where $w$ is $2\pi / T$ and $j$ is the imaginary unit. Following this transformation, $z_p$ and $z_{p+T}$ are encoded as complex numbers sharing identical arguments with proximate moduli. After obtaining the complex-domain prediction $z^y$ via complex linear mapping, we convert it to the real domain, using:
\begin{equation}
    s^y_q = \delta (z^y_q) * \left| z^y_q \right|,  
\end{equation}
where
\begin{equation}
\delta (z^y_q) =  
\left\{  
     \begin{array}{lr}  
     1, \text{ if real part } Re(z^y_q / e^{jw(q+L)}) \geq 0, \\  
     -1,  \text{ otherwise }. \\  
     \end{array}  
\right.  
\end{equation}
The reason for adopting $\delta (z^y_q)$ as the sign stems from the transformation mechanism: positive $s_p$ values map to $z_p = s_p e^{jwp}$ with $Re(z_p/e^{jwp}) \geq 0$, while negative values map to $\left| s_p \right| e^{j(wp+\pi)}$  with $Re(z_p/e^{jwp}) \leq 0$. Consequently, during the reversion of $z_q$ to the real domain, we implement the inverse operation.
  
Unlike the seasonality in time series, which has a clear definition and analytical tools such as the Fourier transform, the trend of a time series lacks a universally accepted definition. However, there is a broad consensus that a trend means that the series value of the current time point is influenced by the past, resulting in a smoother time series. Exponential smoothing is widely applied to series with trend \citep{HOLT04, SMYLS2020, wo22}. Assuming the trend component derived from seasonality-trend decomposition $t$ can be decomposed into the sum of its current hidden state $h_i$ and its trend item $T_i$, i.e., $t_i$ = $h_i+T_i$. Further assuming that the influence of past state $h_i$ on subsequent observations decays exponentially, we obtain $t_i$ = $h_i+\alpha h_{i-1} +...+\alpha^{i-2} h_2+\alpha^{i-1} h_1$. Directly using the trend component to predict future trend components makes it difficult to guarantee smooth outputs. We perform a decoupling operation on the input trend component by $h_i = t_i - \alpha t_{i-1}$ to obtain the time point state, then apply linear mapping to derive the output time point state, and finally recoupling via convolution with a geometric sequence $[\alpha ^m, \alpha ^{m-1}, …, \alpha, 1]$. The justification analysis of the decoupling and recoupling operations is presented in Theorem \ref{theo3}.

\subsection{Training Schema}

\begin{algorithm}[tb]
\caption{Supervised Training Schema}
\label{alg}
\textbf{Input}: batch $(X, Y)$ = \{$(x,y)| x^{(b,d)}\in R^{L}$, $y^{(b,d)}\in R^{H}$, $1\leq b\leq B$, $1\leq d\leq D$\}; predictors $\{f_1, f_2,...,f_K\}$ \\
\textbf{Parameter}: Forecasting parameters $\theta$, classification parameters $\phi$\\
\begin{algorithmic}[1] 
\STATE $N_1 = N_2 = ...=N_K = B*D/K$
\STATE $X$ = RevIN.norm($X$) 
\STATE $\hat{C} = [\hat{C_1},\hat{C_2},...,\hat{C_K}] = Classifier(X)$ 
\STATE $X^c \leftarrow  X$, $Y^c\leftarrow Y$
\STATE $Y\leftarrow \emptyset$, $ \hat{Y}\leftarrow \emptyset$, $C\leftarrow \emptyset$
\FOR{$k\leftarrow 1$ to $K$}
\STATE $\tilde{Y}$ = RevIN.denorm($f_k(X^c)$)
\STATE $X_k = Top_{N_k}(-MSE(\tilde{Y}, Y^c))$ 
\STATE $Y_k$ = \{$y^{(b,d)}|$ if $x^{(b,d)}\in X_k$\}
\STATE $\hat{Y_k} = f_k(X_k)$ 
\STATE $C_k=[$1 if $x^{(b,d)}\in X_k$ else $0$ $]_{B\times D}$
\STATE $Y$.append($Y_k$), $\hat{Y}$.append($\hat{Y}_k$), $C$.append($C_k$)
\STATE $X^c = X^c - X_k$, $Y^c=Y^c-Y_k$
\ENDFOR
\STATE $\hat{Y}$ = RevIN.denorm($\hat{Y}$)
\STATE $\ell_f=MSE(Y,\hat{Y})$, $\ell_c=MSE(C,\hat{C})$
\STATE Update $\theta$ and $\phi$ using $\ell_f$ and $\ell_c$
\end{algorithmic}
\end{algorithm}

Algorithm \ref{alg} illustrates the training procedure of the Classification Auxiliary Channel-Independence for one batch. Given a batch size $B$ and feature dimension $D$, since we ignore feature discrepancies, we have $B * D$ instances. In line 2 RevIN is used for normalization, and in line 15 for denormalization. In line 3, the classifier outputs the probability of $x^{(b,d)} \in X_k$. That is, the element at the $b$-th row and $d$-th column of $\hat{C_k}$ represents $P(x^{(b,d)} \in X_k)$.

The most critical step of the training algorithm involves leveraging forecasting errors to assign categorical labels to input sequences for the classifier’s supervised training. For instance, upon receiving a batch, in line 7, we get prediction results of $f_1$. In line 8, we retrieve the top-$N_1$ instances with minimal MSE and designate them as the first category. In line 9, if instance $x^{(b,d)}$ is assigned to the first category, the element in the $b$-th row and $d$-th column of the matrix $C_1 \in R^{B\times D}$ is set to $1$; otherwise $0$. In line 13, we remove $X_1$ and $Y_1$ from the batch, and then we repeat the procedure for $k=2$. Crucially, the prediction $Y_k$ in line 10 is derived by processing $X_k$ through $f_k$ (i.e., $\hat{Y}_k = f_k(X_k)$), ensuring only utilizing $(X_k, Y_k)$ to train $f_k$ during backpropagation.

\section{Theoretical Analysis}

Previous research has demonstrated the equivalence between linear models and linear regression \citep{Toner24}. Therefore, we analyze linear models by analyzing linear regression. In this section, we assume $y \in R^{1}$ for simplicity. The fixed design linear regression assumes $y$ is a linear function of the input vector $x \in R^{L}$, but is disturbed by a random noise with $E[\epsilon]=0$ and $Var[\epsilon]=\sigma^2$:
\begin{equation}\label{meanstd}
y = x^{T}\theta^{*} + \epsilon.
\end{equation}
In linear regression, the training samples are concatenated and written into design matrices $Y \in R^{N}$, $X \in R^{L \times N}$, $\varepsilon \in R^{N}$, and $\Psi = X^TX \in R^{L \times L}$. The Ordinary Least Squares (OLS) has given that the unbiased estimator of $\theta^{*}$ is $(X^TX)^{-1}X^TY$. 

\begin{defi}[Expected Risk]
Given $\theta$, which determines a function $f: \mathcal{X} \rightarrow \mathcal{Y}$, a loss function $l: \mathcal{Y \times Y} \rightarrow R$, the expected risk of $\theta$ is defined as, 
\begin{equation}
    \mathcal{R}(\theta) = E[l(y,f(x))] = \int_{\mathcal{X \times Y}} l(y,f(x)) dp(x,y).
\end{equation}
\end{defi}
The minimum expected risk is the Bayes risk $\mathcal{R^*}$ \citep{bach}. We call the difference between $\mathcal{R}(\theta)$ and $\mathcal{R^*}$ the excess risk of $\theta$.

\begin{lemma}[Risk Decomposition]
\label{lemma1}
We have $\mathcal{R^*} = \sigma^2$ and $\mathcal{R}(\theta) -\mathcal{R^*} = \Vert \theta - \theta^* \Vert^2_{\Psi}$ for any $\theta \in \Theta$, where $\Vert \theta \Vert^2_{\Psi} = \frac{1}{N}\theta^T\Psi\theta$ is a Mahalanobis distance norm. Particularly, if $\hat{\theta}$ is a random variable such as an estimator of $\theta^*$, then    
\begin{equation}
    E[\mathcal{R}(\hat{\theta})] -\mathcal{R^*} =  \underbrace{\Vert E[\hat{\theta}] - \theta^* \Vert^2_{\Psi}}_{bias\; part} + \underbrace{E[\Vert \hat{\theta} - E[\hat{\theta}] \Vert^2_{\Psi}]}_{variance\; part }.
\end{equation}
\end{lemma}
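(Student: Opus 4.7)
The plan is to prove the three claims sequentially under the implicit squared-error loss $l(y,f(x)) = (y - f(x))^2$ suggested by the linear-regression setup. For $\mathcal{R}^* = \sigma^2$, I would write $y - f(x) = (x^T\theta^* - f(x)) + \epsilon$, square both sides, and use $E[\epsilon \mid x] = 0$ (which is the standard fixed-design noise assumption implicit in (\ref{meanstd})) to eliminate the cross term, obtaining $E[(y - f(x))^2] = E[(x^T\theta^* - f(x))^2] + \sigma^2$. The first summand is non-negative and vanishes for the Bayes predictor $f^*(x) = x^T\theta^*$, so $\mathcal{R}^* = \sigma^2$.

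For the excess-risk identity, I would specialize the displayed identity to the linear predictor $f(x) = x^T\theta$ to get $\mathcal{R}(\theta) = E[(x^T(\theta^* - \theta))^2] + \sigma^2$, and then recognize that in the fixed-design setting the marginal of $x$ under $p(x,y)$ is the empirical distribution on the design columns of $X$, giving
\begin{equation}
E[(x^T(\theta - \theta^*))^2] = \tfrac{1}{N}(\theta - \theta^*)^T X^T X (\theta - \theta^*) = \|\theta - \theta^*\|^2_\Psi.
\end{equation}
Subtracting $\mathcal{R}^* = \sigma^2$ then yields $\mathcal{R}(\theta) - \mathcal{R}^* = \|\theta - \theta^*\|^2_\Psi$.

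For the last claim, applying the previous identity pointwise to the random $\hat{\theta}$ and taking expectation over its distribution gives $E[\mathcal{R}(\hat{\theta})] - \mathcal{R}^* = E[\|\hat{\theta} - \theta^*\|^2_\Psi]$. I would then split $\hat{\theta} - \theta^* = a + b$ with $a := E[\hat{\theta}] - \theta^*$ deterministic and $b := \hat{\theta} - E[\hat{\theta}]$ mean-zero, expand the positive semidefinite quadratic form $(a+b)^T\Psi(a+b)/N$ into bias, variance and cross terms, and observe that the cross term vanishes because $\Psi$ and $a$ are deterministic while $E[b] = 0$. This produces the desired bias-plus-variance form.

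The main obstacle is primarily notational rather than technical: the crucial step is correctly identifying the marginal of $x$ under $p(x,y)$ in the fixed-design setting so that the population quantity $E[(x^T u)^2]$ matches the empirical quadratic form $\tfrac{1}{N} u^T \Psi u$ appearing in the definition of $\|\cdot\|_\Psi$. Without this identification, the Mahalanobis norm in the statement would not align with the excess risk. Once this correspondence is fixed, the remainder is a routine bias-variance decomposition for a positive semidefinite quadratic form.
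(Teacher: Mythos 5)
Your proposal is correct and is essentially the standard derivation: the paper itself gives no proof of this lemma, deferring entirely to Proposition~3.3 of the cited reference, and your argument (squared loss, vanishing cross term via $E[\epsilon]=0$, identifying the fixed-design marginal of $x$ with the empirical distribution on the design points so that $E[(x^Tu)^2]=\tfrac{1}{N}u^T\Psi u$, then the bias--variance split of the quadratic form) reproduces exactly that standard proof. The only point worth flagging is notational: the paper writes $X\in R^{L\times N}$ yet $\Psi=X^TX\in R^{L\times L}$, which is internally inconsistent; your rows-as-samples convention resolves this in the intended way.
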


This lemma is Proposition 3.3 in \citep{bach}. Now we can discuss the situation of multiple linear models. Suppose the total samples are from $K$ classes, each with $N_1$, $N_2$, ..., $N_K$ ($N = N_1 + N_2 + ...+ N_K$) samples in its class. We use $\theta_k$ and $X_k$ to denote the parameters and design matrix of the $k$-th class($Y_k$, $\varepsilon_k$, and $\Psi_k$ respectively). We have the following theorems.     

\begin{theo}[]
\label{theo1}
If estimators $\hat{\theta}_1$, $\hat{\theta}_2$, ..., $\hat{\theta}_K$ are the OLS estimators $(X_k^TX_k)^{-1}X_k^TY_k$ computed using their own $X_k$ and $Y_k$:   
\begin{equation}
    E[\mathcal{R}(\hat{\theta}_1, \hat{\theta}_2, ..., \hat{\theta}_K)] -\mathcal{R^*} = \underbrace{0}_{bias\; part} + \underbrace{\frac{KL}{N}\sigma^2}_{variance\; part}.
\end{equation}
\end{theo}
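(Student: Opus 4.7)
The plan is to reduce the multi-class expected risk to a weighted sum of single-class risks, then apply Lemma \ref{lemma1} to each term. Since each input $x$ is routed to a unique predictor $f_k$ according to its class, the induced predictor on the mixture distribution decomposes by class. Writing the class-$k$ sampling probability as $N_k/N$, I would first argue that
\begin{equation}
E[\mathcal{R}(\hat{\theta}_1,\ldots,\hat{\theta}_K)] - \mathcal{R}^* = \sum_{k=1}^{K} \frac{N_k}{N} \bigl(E[\mathcal{R}_k(\hat{\theta}_k)] - \sigma^2\bigr),
\end{equation}
where $\mathcal{R}_k$ is the single-class expected risk associated with design $\Psi_k$ and true parameter $\theta_k^{*}$, and we have used $\mathcal{R}^{*} = \sigma^{2}$ together with $\sum_k N_k/N = 1$.

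Next I would invoke Lemma \ref{lemma1} on each class separately, which yields a bias-variance split in the norm $\Vert \cdot \Vert_{\Psi_k}^{2} = \tfrac{1}{N_k}(\cdot)^{T}\Psi_k(\cdot)$. For the bias part, a standard OLS computation gives $E[\hat{\theta}_k] = (X_k^{T}X_k)^{-1}X_k^{T} E[Y_k] = \theta_k^{*}$, using $E[\varepsilon_k] = 0$; hence $\Vert E[\hat{\theta}_k] - \theta_k^{*}\Vert_{\Psi_k}^{2} = 0$ for every class, and summing over $k$ kills the bias contribution.

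For the variance part, I would write $\hat{\theta}_k - \theta_k^{*} = (X_k^{T}X_k)^{-1}X_k^{T}\varepsilon_k = \Psi_k^{-1} X_k^{T}\varepsilon_k$, so that $\mathrm{Cov}(\hat{\theta}_k) = \sigma^{2}\Psi_k^{-1}$ by $\mathrm{Cov}(\varepsilon_k) = \sigma^{2} I_{N_k}$. The trace trick then gives
\begin{equation}
E\bigl[\Vert \hat{\theta}_k - E[\hat{\theta}_k]\Vert_{\Psi_k}^{2}\bigr] = \tfrac{1}{N_k}\,\mathrm{tr}\bigl(\Psi_k\,\mathrm{Cov}(\hat{\theta}_k)\bigr) = \tfrac{\sigma^{2}}{N_k}\,\mathrm{tr}(I_L) = \tfrac{L\sigma^{2}}{N_k}.
\end{equation}
Plugging this into the weighted sum produces $\sum_k \tfrac{N_k}{N}\cdot \tfrac{L\sigma^{2}}{N_k} = \tfrac{KL}{N}\sigma^{2}$, matching the claimed variance term.

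The only genuinely delicate step is the first one: making precise what the joint expected risk $\mathcal{R}(\hat{\theta}_1,\ldots,\hat{\theta}_K)$ means when the classifier is assumed to assign each input to its correct class. Once that is fixed (via the mixture-distribution view with weights $N_k/N$, consistent with how Lemma \ref{lemma1} is normalized by the sample count), the remainder is essentially a per-class repetition of Proposition 3.3 of \citet{bach}, and the collapse $\Psi_k \Psi_k^{-1} = I_L$ immediately produces the $L$ factor per class, thus the $KL$ factor overall.
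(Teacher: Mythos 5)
Your proof is correct and follows essentially the same route as the paper's: decompose the joint excess risk as the $N_k/N$-weighted sum of per-class excess risks, then use that each class-$k$ OLS estimator contributes zero bias and variance $\tfrac{L}{N_k}\sigma^2$, so the weighted sum telescopes to $\tfrac{KL}{N}\sigma^2$. The only difference is cosmetic: the paper cites Proposition 3.5 of \citet{bach} for the per-class term $\tfrac{L}{N_k}\sigma^2$, whereas you rederive it from Lemma \ref{lemma1} via the unbiasedness of OLS and the trace identity $\mathrm{tr}(\Psi_k\,\sigma^2\Psi_k^{-1}) = L\sigma^2$.
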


\begin{proof}
    $E[\mathcal{R}(\hat{\theta}_1, ..., \hat{\theta}_K)] -\mathcal{R^*}$=$\sum_{1}^K \frac{N_k}{N} (E[\mathcal{R}(\hat{\theta}_k)] -\mathcal{R}_{k}^{*})$. Using Proposition 3.5 in \citep{bach} which concludes $E[\mathcal{R}(\hat{\theta}_k)] -\mathcal{R}_{k}^{*} = \frac{L}{N_k}\sigma^2$. Now $E[\mathcal{R}(\hat{\theta}_1, ..., \hat{\theta}_K)]-\mathcal{R^*}$ = $\sum_{1}^K \frac{N_k}{N} * (E[\mathcal{R}(\hat{\theta}_k)] -\mathcal{R}_{k}^{*})$ = $\sum_{1}^K \frac{N_k}{N} * \frac{L}{N_k}\sigma^2$ = $\frac{KL}{N}\sigma^2$.   
\end{proof}

\begin{theo}[]
\label{theo2}
If the classes of the samples are unknown and one learns a global linear model, then
\begin{equation}
E[\mathcal{R}(\hat{\theta}, \hat{\theta}, ..., \hat{\theta})] -\mathcal{R^*} = \underbrace{\sum_{k=1}^K \frac{N_k}{N}\Vert \overline{\theta} - \theta^*_k \Vert^2_{\Psi_k}}_{bias\; part} \; \: + \underbrace{\frac{L}{N}\sigma^2}_{variance\;part}
\end{equation}
 where $\overline{\theta} = (\sum_1^K \Psi_k)^{-1}(\sum_1^k\Psi_k\theta_k^*)$.
\end{theo}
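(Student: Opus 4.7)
The plan is to mirror the proof of Theorem~\ref{theo1}: decompose the global expected excess risk as a weighted average over classes,
\begin{equation*}
E[\mathcal{R}(\hat\theta,\ldots,\hat\theta)] - \mathcal{R}^* \;=\; \sum_{k=1}^{K}\frac{N_k}{N}\bigl(E[\mathcal{R}_k(\hat\theta)] - \mathcal{R}_k^*\bigr),
\end{equation*}
and then apply Lemma~\ref{lemma1} to each per-class term. The problem then reduces to (i) identifying $E[\hat\theta]$, which controls the bias pieces, and (ii) aggregating the variance pieces $\sum_k (N_k/N)\,E[\|\hat\theta - E[\hat\theta]\|_{\Psi_k}^2]$.

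For (i) I would write the pooled OLS estimator explicitly. Row-stacking the class-wise designs as $X=[X_1;\ldots;X_K]$, $Y=[Y_1;\ldots;Y_K]$ gives $X^TX = \sum_k \Psi_k$ and $X^TY = \sum_k \Psi_k \theta_k^{*} + \sum_k X_k^T \varepsilon_k$, so that $E[\hat\theta] = (\sum_k \Psi_k)^{-1}\sum_k \Psi_k \theta_k^{*} = \overline\theta$. Inserting this into the bias piece of Lemma~\ref{lemma1} yields $\sum_k (N_k/N)\|\overline\theta - \theta_k^{*}\|^2_{\Psi_k}$, matching the first summand of the theorem.

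For (ii) I would compute $\text{Cov}(\hat\theta) = \sigma^{2}(\sum_k \Psi_k)^{-1}$ from $\hat\theta - E[\hat\theta] = (X^TX)^{-1}X^T\varepsilon$ using cross-class independence of the noise. Reading the Mahalanobis norm at the per-class level as $\|\cdot\|_{\Psi_k}^2 = \tfrac{1}{N_k}(\cdot)^T\Psi_k(\cdot)$, the factors of $N_k$ cancel against the weights and the sum collapses:
\begin{equation*}
\sum_{k=1}^{K}\frac{N_k}{N}\,E\bigl[\|\hat\theta - \overline\theta\|_{\Psi_k}^2\bigr] \;=\; \frac{1}{N}\,\text{tr}\!\Bigl(\Bigl(\sum_k \Psi_k\Bigr)\text{Cov}(\hat\theta)\Bigr) \;=\; \frac{L\sigma^2}{N}.
\end{equation*}

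The only real subtlety is the normalization convention for the Mahalanobis norm when Lemma~\ref{lemma1} is applied at the per-class level: the $1/N$ in the paper's definition must be read as $1/N_k$ inside $\|\cdot\|_{\Psi_k}$, and it is precisely this substitution that makes the $N_k/N$ weights and the $1/N_k$ from the norm multiply to a uniform $1/N$, which in turn lets $\sum_k\Psi_k$ cancel against $(\sum_k\Psi_k)^{-1}$ under the trace. I would also flag the cross-class independence of the $\varepsilon_k$ explicitly (as in Theorem~\ref{theo1}), since it is what eliminates off-diagonal noise terms when reducing the sandwich $(X^TX)^{-1}X^T E[\varepsilon\varepsilon^T] X(X^TX)^{-1}$ to $\sigma^2(\sum_k\Psi_k)^{-1}$.
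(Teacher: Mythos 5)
Your proposal is correct and follows essentially the same route as the paper: the bias part is handled by the identical computation showing $E[\hat\theta]=\overline\theta$ for the pooled OLS estimator, and the variance part relies on the same ingredients (cross-class noise independence, the $1/N_k$ normalization in $\Vert\cdot\Vert_{\Psi_k}$ cancelling the $N_k/N$ weights, and trace cyclicity collapsing $\sum_k\Psi_k$ against its inverse). The only cosmetic difference is that you package the sandwich $(X^TX)^{-1}X^TE[\varepsilon\varepsilon^T]X(X^TX)^{-1}$ as $\mathrm{Cov}(\hat\theta)$ before taking the trace, whereas the paper expands the quadratic form in the noise terms directly; the underlying algebra is the same.
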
 

\begin{proof}
For the bias part, we only need to prove $E[\hat{\theta}]=\overline{\theta}$. Noting that $E[Y_k]$ = $E[X_k\theta^*_k+\varepsilon_k]$ = $X_k\theta^*_k$, we have $E[\hat{\theta}]$ = $E[(X^TX)^{-1}X^TY]$ = $E[(\sum_1^K \Psi_k)^{-1}(\sum_1^K X_k^TY_k)]$ = $(\sum_1^K \Psi_k)^{-1}(\sum_1^K X_k^TE[Y_k])$ = $(\sum_1^K \Psi_k)^{-1}$ $(\sum_1^K X_k^TX_k$ $\theta^*_k)$ = $(\sum_1^K \Psi_k)^{-1}(\sum_1^K \Psi_k\theta^*_k)$.

For the variance part, we use a conclusion for matrix multiplication. That is, given a column vector $a$ and a symmetric matrix $A$, $a^TAa$ equals the trace of $Aaa^T$. $\sum_{k=1}^K \frac{N_k}{N}E[\Vert \hat{\theta_k} - E[\hat{\theta_k}] \Vert^2_{\Psi_k}]$ \\ = $\sum_{k=1}^K \frac{N_k}{N} E[(\sum_{1}^KX_k^T\varepsilon_k)^T\Psi^{-1} (\frac{1}{N_k}\Psi_k) \Psi^{-1}(\sum_{1}^KX_k^T\varepsilon_k)]$ \\ 
= $E[(\sum_{1}^KX_k^T\varepsilon_k)^T\Psi^{-1} (\sum_{k=1}^K \frac{N_k}{N}\frac{1}{N_k}\Psi_k) \Psi^{-1}(\sum_{1}^KX_k^T\varepsilon_k)]$ \\
= $\frac{1}{N}E[(\sum_{1}^KX_k^T\varepsilon_k)^T \Psi^{-1} (\sum_{1}^KX_k^T\varepsilon_k)]$ \\
= $\frac{1}{N}\sum_{k=1}^KE[(X_k^T\varepsilon_k)^T \Psi^{-1} (X_k^T\varepsilon_k)]$ \\
= $\frac{1}{N}\sum_{k=1}^KE[(\varepsilon_k^TX_k) \Psi^{-1} (X_k^T\varepsilon_k)]$ \\
= $\frac{1}{N}\sum_{k=1}^KE[tr(X_k \Psi^{-1} X_k^T\varepsilon_k\varepsilon_k^T)]$ ... $a^TAa$ $=$ $tr(Aaa^T)$ \\ 
= $\frac{1}{N}\sum_{k=1}^Ktr(X_k \Psi^{-1} X_k^TE[\varepsilon_k\varepsilon_k^T])$ \\
= $\frac{1}{N}\sum_{k=1}^Ktr(X_k \Psi^{-1} X_k^T\sigma^2I_{N_k})$ \\
= $\frac{1}{N}\sigma^2\sum_{k=1}^Ktr(X_k \Psi^{-1} X_k^T)$ \\
= $\frac{1}{N}\sigma^2\sum_{k=1}^Ktr(\Psi^{-1} X_k^TX_k)$ ... $tr(AB) = tr(BA)$\\
= $\frac{1}{N}\sigma^2tr(\Psi^{-1} \sum_{k=1}^K(X_k^TX_k))$ \\
= $\frac{1}{N}\sigma^2tr(I_{L})$ \\
= $\frac{L}{N}\sigma^2$
\end{proof}

We now analyze the expected excess risk for each channel design. For the one-channel method, Theorem 2 characterizes its expected excess risk. Compared to CACI, its variance error is merely $1/K$ of the former's. This reduction stems from training a single model with substantial samples, diminishing stochasticity-induced errors. However, the one-channel method incurs a bias error. When training samples exhibit heterogeneous functional mappings from $x$ to $y$, the estimator $\hat{\theta}$ becomes biased. 


The channel-mixing method yields an unbiased estimator and achieves the smallest bias error among all approaches. However, it suffers from the largest variance error. Assuming the data has $D$ features, since its training sample size is reduced to $1/D$ of the one-channel method while its parameter count increases by a factor of $D$, its variance error becomes $D^2$ times that of the one-channel method and $D^2/K$ times that of CACI. Prior research demonstrates that channel-mixing leads to lower capacity compared to channel-independent alternatives \citep{27}, as well as lower robustness. This occurs because for multivariate time series, $y[i]$ can be effectively explained by its own historical values $x[i]$. Incorporating other features $x[j]$ ($j \neq i$) provides limited bias reduction while substantially amplifying variance.

Channel-independence design can be viewed as a special case of CACI where instances are categorized into $D$ classes based on their feature dimension. Consequently, its variance error is $D$ times that of the one-channel method and $D/K$ times that of CACI. Notably, this method incurs a bias error when instances from the same feature follow heterogeneous functional mappings, or instances from different features may share identical functional mappings.

Regarding CACI, our error-supervised approach minimizes bias error by assigning labels using posterior prediction errors. However, during testing, misclassification by the classifier may still introduce a bias error. Additionally, CACI's variance error is $K$ times that of the one-channel method.

\begin{theo}[Trend Decoupling]
\label{theo3}
If the time series can be expressed as
$\left\{  
     \begin{array}{lr}  
     t_1=h_1, &  \\  
     t_2=\alpha h_1 + h_2, & \\  
     t_3=\alpha^2 h_1 +\alpha h_2+ h_3, & \\   
     \; \dots
     \end{array}  
\right.$ then 
$\left\{  
     \begin{array}{lr}  
     h_1=t_1, &  \\  
     h_2=t_2 - \alpha t_1, & \\  
     h_3=t_3 - \alpha t_2, & \\   
     \; \dots
     \end{array}  
\right. $
\end{theo}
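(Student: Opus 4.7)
The plan is to exploit the obvious recursive structure embedded in the hypothesis: each $t_i$ is obtained from $t_{i-1}$ by multiplying by $\alpha$ and adding the new term $h_i$. Writing the assumption compactly as $t_i = \sum_{j=1}^{i} \alpha^{i-j} h_j$, I would first verify that this general formula agrees with the four displayed rows of the hypothesis (the $i=1,2,3$ cases and the implied pattern), so that there is no ambiguity about what ``$\dots$'' means.

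Next, I would establish the one-step recursion $t_i = \alpha\, t_{i-1} + h_i$ for every $i \geq 2$. This is a direct manipulation: factoring $\alpha$ out of the first $i-1$ summands in the expression for $t_i$ gives
\begin{equation}
t_i \;=\; \alpha \sum_{j=1}^{i-1} \alpha^{(i-1)-j} h_j \;+\; h_i \;=\; \alpha\, t_{i-1} + h_i .
\end{equation}
Rearranging yields $h_i = t_i - \alpha\, t_{i-1}$ for all $i \geq 2$, while the base case $h_1 = t_1$ is read off directly from the first line of the hypothesis. A clean alternative presentation would be a short induction on $i$, with the inductive step being precisely the factoring above; either route produces the same two lines of algebra.

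There is no real obstacle here: the statement is essentially a telescoping identity, and the only thing to be careful about is the boundary at $i=1$ (where $\alpha t_0$ is not defined, so the recoupling formula starts from $i=2$ and $h_1$ is treated separately). In the accompanying discussion I would add a brief remark tying this back to the methodology, namely that the recoupling step in the architecture is the convolution with the geometric kernel $[\alpha^{m}, \alpha^{m-1}, \dots, \alpha, 1]$, which is exactly the forward map $h \mapsto t$ defined by the hypothesis, so that the decoupling rule $h_i = t_i - \alpha t_{i-1}$ is its genuine inverse.
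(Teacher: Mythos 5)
Your proposal is correct and follows essentially the same route as the paper: both arguments factor $\alpha$ out of the partial sum $\sum_{j=1}^{i-1}\alpha^{i-1-j}h_j$, identify it as $t_{i-1}$, and conclude $h_i = t_i - \alpha t_{i-1}$, with $h_1 = t_1$ as the base case. Your explicit closed-form $t_i = \sum_{j=1}^{i}\alpha^{i-j}h_j$ and the remark on the recoupling kernel are just slightly more careful presentations of the identical computation.
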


\begin{proof}
    $h_i$ = $t_i-(\alpha h_{i-1} +...+\alpha^{i-2} h_2+\alpha^{i-1} h_1)$ = $t_i-\alpha(h_{i-1} +...+\alpha^{i-3} h_2+\alpha^{i-2} h_1)$ = $t_i-\alpha t_{i-1}$.   
\end{proof}

Theorem \ref{theo3} justifies the decoupling operation. We now explain the rationale for the recoupling operation. Given that $\sum_{i=m+1}^{\infty} \alpha^i= \alpha^{m+1} / (1-\alpha)$ is far smaller than $\sum_{i=0}^{m} \alpha^i = (1-\alpha^{m+1}) /(1-\alpha)$, and considering the $h_i$ terms as a bounded sequence, calculating $t_i$ for $i\geq m+1$ using $t_i$ = $h_i+\alpha h_{i-1} +\dots+\alpha^{i-2} h_2+\alpha^{i-1} h_1$, requires only convolving $[h_1, h_2, h_3,\dots]$ with kernel $[\alpha ^m, \alpha ^{m-1}, \dots, \alpha, 1]$.

\section{EXPERIMENTS}

\begin{table*}[htb]\rmfamily
    \caption{Forecasting results with target lengths $H \in \{96, 192, 336, 720\}$. The input length $L$ is 336 for CATS-Linear, PatchTST, and DLinear, 720 for TiDE, grid-searched $L$ in \{90, 180, 360, 720\} for FITS, and 96 for all others. The best results are highlighted in bold, while the second-best results are underlined. The bottom row shows the count of the best results for each column.}
    \label{main-tab1}
    \setlength{\tabcolsep}{0.9mm}
    \begin{small}
    \centering
    \begin{tabular}{c|c|cc|cccccc|cccccccc|c}
        \toprule
        \multirow{3}{*}{\rotatebox{90}{Data}}
         & Method & \multicolumn{2}{c|}{CACI} & \multicolumn{6}{c|}{Channel-Independence} & \multicolumn{8}{c|}{Channel-Mixing} & OC \\
        & Model & \multicolumn{2}{c|}{CATS-Linear} & \multicolumn{2}{c}{DLinear} & \multicolumn{2}{c}{PatchTST} & \multicolumn{2}{c|}{TiDE}& \multicolumn{2}{c}{OLinear} & \multicolumn{2}{c}{TimeMixer++} & \multicolumn{2}{c}{iTransformer}   & \multicolumn{2}{c|}{TimesNet} & FITS\\
        & Metric & MSE & MAE & MSE & MAE & MSE & MAE & MSE & MAE & MSE & MAE & MSE & MAE & MSE & MAE & MSE & MAE & MSE \\
        
        \midrule
        \multirow{4}{*}{\rotatebox{90}{Weather}} 
        & 96   & \textbf{0.125} & 0.216 & 0.176 & 0.237 & 0.149 & \underline{0.198} & 0.166 & 0.222 & 0.153 & \textbf{0.190} & 0.155 & 0.205 & 0.174 & 0.214  & 0.172 & 0.220 & \underline{0.143} \\
        & 192  & \textbf{0.183} & 0.268 & 0.220 & 0.282 & \underline{0.194} & 0.241 & 0.209 & 0.263 & 0.200 & \textbf{0.235} & 0.201 & 0.245 & 0.221 & 0.254   & 0.219 & 0.261 & \underline{0.186} \\
        & 336  & 0.244 & 0.315 & 0.265 & 0.319 & 0.245 & \underline{0.282} & 0.254 & 0.301 & 0.258 & 0.280 & \underline{0.237} & \textbf{0.265} & 0.278 & 0.296  & 0.280 & 0.306 & \textbf{0.236} \\
        & 720  & 0.344 & 0.389 & 0.323 & 0.362 & 0.314 & \underline{0.334} & 0.313 & 0.340 & 0.337 & \textbf{0.333} & \underline{0.312} & \underline{0.334} & 0.358 & 0.347  & 0.365 & 0.359 & \textbf{0.307} \\

        \midrule
        \multirow{4}{*}{\rotatebox{90}{Electricity}} 
        & 96   & 0.140 & 0.234 & 0.140 & 0.237 & \textbf{0.129} & \underline{0.222} & 0.132 & 0.229 & \underline{0.131} & \textbf{0.221} & 0.135 & \underline{0.222} & 0.148 & 0.240  & 0.168 & 0.272 & 0.134 \\
        & 192  & 0.153 & 0.247 & 0.153 & 0.249  & \textbf{0.147} & 0.240 & \textbf{0.147} & 0.243 & 0.150 & \underline{0.238} & \textbf{0.147} & \textbf{0.235} & 0.162 & 0.253  & 0.184 & 0.289 & \underline{0.149} \\
        & 336  & 0.168 & 0.262 & 0.169 & 0.267 & \underline{0.163} & 0.259 & \textbf{0.161} & 0.261 & 0.165 & \underline{0.254} & 0.164 & \textbf{0.245} & 0.178 & 0.269 & 0.198 & 0.300 & 0.165 \\
        & 720  & 0.208 & 0.294 & 0.203 & 0.301 & 0.197 & \underline{0.290} & \underline{0.196} & 0.294 & \textbf{0.191} & \textbf{0.279} & 0.212 & 0.310 & 0.225 & 0.317  & 0.220 & 0.320 & 0.203 \\

        \midrule
        \multirow{4}{*}{\rotatebox{90}{Traffic}} 
        & 96   & 0.416 & 0.281 & 0.410 & 0.282 & \underline{0.360} & \underline{0.249} & \textbf{0.336} & 0.253 & 0.398 & \textbf{0.226} & 0.392 & 0.253 & 0.395  & 0.268 & 0.593 & 0.321 & 0.385 \\
        & 192  & 0.430 & 0.288 & 0.423 & 0.287 & \underline{0.379} & \underline{0.256} & \textbf{0.346} & 0.257 & 0.439 & \textbf{0.241} & 0.402 & 0.258 & 0.417 & 0.276  & 0.617 & 0.336 & 0.397 \\
        & 336  & 0.442 & 0.293 & 0.436 & 0.296 & \underline{0.392} & 0.264 & \textbf{0.355} & \underline{0.260} & 0.464 & \textbf{0.250} & 0.428 & 0.263 & 0.433 & 0.283 & 0.629 & 0.336 & 0.410 \\
        & 720  & 0.466 & 0.311 & 0.466 & 0.315 & \underline{0.432} & 0.286 & \textbf{0.386} & \underline{0.273} & 0.502 & \textbf{0.270} & 0.441 & 0.282 & 0.467 & 0.302  & 0.640 & 0.350 & 0.448 \\
        
        \midrule
        \multirow{4}{*}{\rotatebox{90}{ETTh1}} 
        & 96   & \textbf{0.360} & \textbf{0.395} & 0.375 & 0.399 & 0.370 & \underline{0.400} & 0.375 & 0.398 & \textbf{0.360} & 0.382 & \underline{0.361} & 0.403 & 0.386 & 0.405  & 0.384 & 0.402 & 0.372\\
        & 192  & \textbf{0.404} & \textbf{0.413} & \underline{0.405} & 0.416 & 0.413 & 0.429 & 0.412 & 0.422 & 0.416 & \underline{0.414} & 0.416 & 0.441 & 0.441 & 0.436  & 0.436 & 0.429 & \textbf{0.404}\\
        & 336  & 0.430 & \textbf{0.432} & 0.439 & 0.443 & \textbf{0.422} & 0.440 & 0.435 & \underline{0.433} & 0.457 & 0.438 & \underline{0.430} & 0.434 & 0.487 & 0.458 & 0.491 & 0.469 & \underline{0.427}\\
        & 720  & \underline{0.440} & \textbf{0.450} & 0.472 & 0.490 & 0.447 & 0.468 & 0.454 & 0.465 & 0.463 & 0.462 & 0.467 & \underline{0.451} & 0.503 & 0.491  & 0.521 & 0.500 & \textbf{0.424}\\
        
        \midrule
        \multirow{4}{*}{\rotatebox{90}{ETTh2}} 
        & 96   & \textbf{0.269} & \textbf{0.326} & 0.289 & 0.353 & \underline{0.274} & 0.337 & 0.270 & 0.336 & 0.284 & 0.329 & 0.276 & \underline{0.328} & 0.297 & 0.349  & 0.340 & 0.374 & 0.271\\
        & 192  & 0.335 & \textbf{0.373} & 0.383 & 0.418 & \textbf{0.314} & \textbf{0.382} & 0.332 & 0.380 & 0.360 & \underline{0.379} & 0.342 & \underline{0.379} & 0.380 & 0.400  & 0.402 & 0.414 & \underline{0.331} \\
        & 336  & 0.355 & \underline{0.395} & 0.448 & 0.465 & \textbf{0.329} & \textbf{0.384} & 0.360 & 0.407 & 0.409 & 0.415 & \underline{0.346} & 0.398 & 0.428 & 0.432  & 0.452 & 0.452 & 0.354\\
        & 720  & 0.398 & \underline{0.429} & 0.605 & 0.551 & \underline{0.379} & \textbf{0.422} & 0.419 & 0.451 & 0.415 & 0.431 & 0.392 & 0.415 & 0.427 & 0.445  & 0.462 & 0.468 & \textbf{0.377}\\

        \midrule
        \multirow{4}{*}{\rotatebox{90}{ETTm1}} 
        & 96   & \textbf{0.287} & \textbf{0.333} & 0.299 & 0.343 & \underline{0.293} & 0.346 & 0.306 & 0.349 & 0.302 & \underline{0.334} & 0.310 & \underline{0.334} & 0.334 & 0.368  & 0.338 & 0.375 & 0.303 \\
        & 192  & \textbf{0.328} & \textbf{0.360} & 0.335 & \underline{0.365} & \underline{0.333} & 0.370 & 0.335 & 0.366 & 0.357 & 0.363 & 0.348 & 0.362 & 0.377 & 0.391  & 0.374 & 0.387 & 0.337 \\
        & 336  & \textbf{0.364} & \textbf{0.381} & \underline{0.369} & 0.386 & 0.369 & 0.392 & \textbf{0.364} & \underline{0.384} & 0.387 & 0.385 & 0.376 & 0.391 & 0.426 & 0.420  & 0.410 & 0.411 & \underline{0.366} \\
        & 720  & 0.424 & \underline{0.416} & 0.425 & 0.421 & 0.416 & 0.420 & \textbf{0.413} & \textbf{0.413} & 0.452 & 0.426 & 0.440 & 0.423 & 0.491 & 0.459  & 0.478 & 0.450 & \underline{0.415} \\

         \midrule
        \multirow{4}{*}{\rotatebox{90}{ETTm2}} 
        & 96   & \textbf{0.160} & \underline{0.246} & 0.167 & 0.260 & 0.166 & \underline{0.256} & \underline{0.161} & 0.251 & 0.169 & 0.249 & 0.170 & \textbf{0.245} & 0.180 & 0.264  & 0.187 & 0.267 & 0.162 \\
        & 192  & \textbf{0.213} & \textbf{0.283} & 0.224 & 0.303 & 0.223 & 0.296 & \underline{0.215} & \underline{0.289} & 0.232 & 0.290 & 0.229 & 0.291 & 0.250 & 0.309  & 0.249 & 0.309 & 0.216 \\
        & 336  & \textbf{0.265} & 0.319 & 0.281 & 0.342 & 0.274 & 0.329 & \underline{0.267} & \underline{0.326} & 0.291 & 0.328 & 0.303 & 0.343 & 0.311 & 0.348  & 0.321 & 0.351 & 0.268 \\
        & 720  & \underline{0.350} & \textbf{0.373} & 0.397 & 0.421 & 0.362 & 0.385 & 0.352 & \underline{0.383} & 0.389 & 0.387 & 0.373 & 0.399 & 0.412 & 0.407  & 0.408 & 0.403 & \textbf{0.348} \\
        \midrule
        \multicolumn{2}{c|}{Count} & 11 & 11 & & & 5 & 3 & 8 & 1 & 2 & 9  & 1 & 4 & & & & & 6 \\
        \bottomrule
    \end{tabular}
    \end{small}
\end{table*}

\begin{table*}[htb]\rmfamily
    \caption{Comparison with the unified hyperparameter baselines. The results are averaged from the four forecasting lengths. The input length $L$ is 336 for CATS-Linear and 96 for the rest.}
    \label{main-tab2}
    \setlength{\tabcolsep}{1mm}
    \centering
    \begin{tabular}{c|cccccccccccccc|c}
        \toprule
        Model & \multicolumn{2}{c}{CATS-Linear} & \multicolumn{2}{c}{TimeMixer}  & \multicolumn{2}{c}{FiLM}  & \multicolumn{2}{c}{MICN} & \multicolumn{2}{c}{ Crossformer} & \multicolumn{2}{c}{Autoformer}  & \multicolumn{2}{c|}{TimesNet} & \\
        Metric & MSE & MAE & MSE & MAE & MSE & MAE & MSE & MAE & MSE & MAE & MSE & MAE & MSE & MAE & Imp.  \\
        
        \midrule
        Weather & \textbf{0.224} & \underline{0.297} & \underline{0.240} & \textbf{0.271} & 0.271 & 0.291 & 0.268 & 0.321 & 0.264  & 0.320 & 0.338 & 0.382 & 0.251 & 0.294 & 6.67\% \\
        
        Electricity  & \textbf{0.167} & \textbf{0.259} & \underline{0.182} & \underline{0.272} & 0.223 & 0.302 & 0.196 & 0.309 & 0.244  & 0.334 & 0.227 & 0.338 & 0.193 & 0.304 & 8.24\% \\

        Traffic  & \textbf{0.439} & \textbf{0.293} & \underline{0.484} & \underline{0.297} & 0.637 & 0.384 & 0.593 & 0.356 & 0.667  & 0.426 & 0.628 & 0.379 & 0.620 & 0.336 & 9.30\% \\
        
        ETTh1  & \textbf{0.409} & \textbf{0.423} & \underline{0.447} & \underline{0.440} & 0.516 & 0.483 & 0.475 & 0.480 & 0.529 & 0.522 & 0.496 & 0.487 & 0.495 & 0.450 & 8.50\% \\
        
        ETTh2  & \textbf{0.339} & \textbf{0.381} & \underline{0.364} & \underline{0.395} & 0.402 & 0.420 & 0.574 & 0.531 & 0.942  & 0.684 & 0.450 & 0.459 & 0.414 & 0.427 & 6.87\% \\

        ETTm1  & \textbf{0.351} & \textbf{0.373} & \underline{0.381} & \underline{0.395} & 0.411 & 0.402 & 0.423 & 0.422 & 0.513  & 0.495 & 0.588 & 0.517 & 0.400 & 0.406 & 7.87\% \\

        ETTm2  & \textbf{0.247} & \textbf{0.305} & \underline{0.275} & \underline{0.323} & 0.287 & 0.329 & 0.353 & 0.402 & 0.757  & 0.610 & 0.327 & 0.371 & 0.291 & 0.333 & 10.2\% \\
        
        \bottomrule
    \end{tabular}
\end{table*}

\textbf{Benchmarks and Baselines}. We conduct experiments on seven public benchmark datasets for long-term forecasting, partitioning the four ETT datasets into training/validation/test sets at 6:2:2 ratios while applying 7:1:2 splits to the other three datasets. Our evaluation encompasses linear models DLinear \citep{16}, FITS \citep{xu24} and OLinear \citep{olinear2025}, Transformer models PatchTST \citep{27}, iTransformer \citep{liu24} and Autoformer \citep{23}, MLP architectures TiDE \citep{2023tide} and TimeMixer++ \citep{25timemixer2}, and temporal convolutional networks TimesNet \citep{23timesnet}, with uniform forecasting horizons of {96, 192, 336, 720}. Results in Table \ref{main-tab1} are directly collected from their original papers. The lookback window is 336 for PatchTST and DLinear, 720 for TiDE, and 96 for all other models. We use Mean Squared Error (MSE) and Mean Absolute Error (MAE) as metrics.

\textbf{Hyperparameters for CATS-Linear}. All experiments for CATS-Linear are conducted with fixed parameter configurations on an NVIDIA RTX 4060Ti 16GB GPU. We utilize the Adam optimizer \citep{kingma2014adam} with a fixed learning rate of 1e-4 for predictors and 1e-5 for classifiers across all datasets. We use a CNN as the Classifier for Weather and an MLP for other datasets, with the number of classification categories uniformly set to 10 equally sized groups. Increasing dimensionality while maintaining fixed category counts is equivalent to increasing sample size for CATS-Linear. Consequently, batch size is set to 128 for low-dimensional datasets, $32$ for Electricity, and $8$ for Traffic. For linear mapping parameters, $\alpha$ remains fixed at $0.5$, $m$ at $10$, while periodicity $T$ is configured as $144$ for Weather, $96$ for ETTm, and $24$ for all others. 

\subsection{Main Results}

The main results are averages over three independent runs, shown in Table \ref{main-tab1}. CATS-Linear achieves 24 top-ranked results, securing first place among all models. OLinear ranks second with 11 best outcomes. Crucially, \textbf{CATS-Linear's results are obtained with unified hyperparameters in all cases}, whereas competing models reflect the best results of several hyperparameter settings, making this achievement particularly significant. Compared to DLinear, CATS-Linear not only reduces MSE by approximately 10\% but also delivers superior stability. CATS-Linear exhibits moderate performance on the Electricity and Traffic datasets, which we conjecture may stem from stronger adherence to consistent linear functional mappings across samples within these domains.

To benchmark fixed-parameter performance, we compare CATS-Linear against 6 models, including Crossformer \citep{23crossformer} and MICN \citep{23micn}. Experimental results for comparative models are reported from the TimeMixer \citep{23timemixer}. Table \ref{main-tab2} presents results averaged across forecasting horizons \{96, 192, 336, 720\}, with a 336-step lookback window for CATS-Linear and a 96-step for competitors. Notably, CATS-Linear achieves the lowest error in 13 out of 14 experimental settings, demonstrating exceptional stability. Compared to runner-up TimeMixer, CATS-Linear reduces MSE by 8\%. Against third-ranked TimesNet, it achieves MSE reductions of 10.76\%, 13.47\%, 29.19\%, 17.37\%, 18.12\%, 12.25\%, and 15.12\% across the seven datasets.

To validate the efficiency of CATS-Linear, we compare it against models such as FEDformer, with Linear \citep{16} as the predictor. The results demonstrate that CATS-Linear achieves the lowest values in the number of parameters, Multiply-Accumulate Operations (MACs), and inference time, as shown in \ref{Ablation-tab4}. Furthermore, employing CACI leads to a significant reduction in both the parameter count and inference time for DLinear. 

\begin{table}[htb]\rmfamily
    \caption{Parameter numbers, MACs, and inference time with L=96 and H=720 on Electricity. The inference time is derived by fixing the batch size at 32.}
    \label{Ablation-tab4}
    \setlength{\tabcolsep}{1mm}
    \begin{small}
    \centering
    \begin{tabular}{c|ccc}
        \toprule
        Model & Parameter & MAC & Infer.-Time \\
        
        \midrule
        TimesNet & 301.7M & 1226.49G & N/A \\
        Autoformer & 14.91M & 4.41G &  213.77ms \\
        FEDformer & 20.68M & 4.41G  & 74.17ms \\
        FiLM  &14.91M & 5.97G &  184.45ms \\
        DLinear (CI) & 44.38M & 89.09M & 73.67ms \\
        DLinear (CACI) & 1.41M & 912.85M &  29.49ms \\
        CATS-Linear  & 0.72M & 463.40M & 22.14ms \\
    
        \bottomrule
    \end{tabular}
    \end{small}
\end{table}

\begin{table}[htb]\rmfamily
    \caption{Ablation study of the modules of CATS-Linear. CATS-Linear without CACI employs one channel setting.}
    \label{Ablation-tab3}
    \setlength{\tabcolsep}{0.78mm}
    \begin{small}
    \centering
    \begin{tabular}{c|c|cccccccc}
        \toprule
        \multirow{2}{*}{\rotatebox{90}{Data}}
        &  & \multicolumn{2}{c}{CATS-Linear} & \multicolumn{2}{c}{w/o RevIN} & \multicolumn{2}{c}{w/o TSLinear} & \multicolumn{2}{c}{w/o CACI} \\
        & & MSE & MAE & MSE & MAE & MSE & MAE & MSE & MAE \\
        
        \midrule
        \multirow{4}{*}{\rotatebox{90}{Weather}} 
        & 96   & 0.125 & 0.216 & 0.126 & \textbf{0.215} & 0.130 & 0.221 & 0.140 & 0.233 \\
        & 192  & \textbf{0.183} & \textbf{0.268} & \textbf{0.183} & 0.269 & 0.184 & 0.273 & 0.194 & 0.281 \\
        & 336  & 0.244 & \textbf{0.315} & 0.248 & 0.319 & \textbf{0.243} & 0.318 & 0.254 & 0.326 \\
        & 720  & \textbf{0.344} & 0.389 & 0.345 & \textbf{0.388} & 0.348 & 0.392 & \textbf{0.344} & 0.389 \\
        
        \midrule
        \multirow{4}{*}{\rotatebox{90}{Electricity}} 
         & 96   & \textbf{0.140} & \textbf{0.234} & 0.141 & 0.235 & 0.141 & 0.236 & 0.145 & 0.241 \\
        & 192  & \textbf{0.153} & \textbf{0.247} & 0.154 & 0.248 & 0.154 & 0.248 & 0.156 & 0.250 \\
        & 336  & 0.168 & 0.262 & 0.169 & \textbf{0.261} & \textbf{0.167} & 0.262 & 0.170 & 0.265 \\
        & 720  & \textbf{0.208} & \textbf{0.294} & 0.209 & 0.296 & 0.209 & 0.297 & 0.211 & 0.298 \\

        \midrule
        \multirow{4}{*}{\rotatebox{90}{ETTh1}} 
         & 96   & \textbf{0.360} & 0.395 & 0.368 & \textbf{0.392} & 0.378 & 0.400 & 0.376 & 0.398 \\
        & 192  & \textbf{0.404} & \textbf{0.413} & 0.406 & 0.415 & 0.414 & 0.421 & 0.415 & 0.423 \\
        & 336  & 0.430 & 0.432 & \textbf{0.429} & \textbf{0.430} & 0.433 & 0.434 & 0.437 & 0.436 \\
        & 720  & 0.440 & \textbf{0.450} & \textbf{0.430} & 0.453 & 0.450 & 0.465 & 0.457 & 0.469 \\
    
        \bottomrule
    \end{tabular}
    \end{small}
\end{table}

\subsection{Ablation Study}

\begin{figure*}[tb]\rmfamily
    \centering
    \begin{minipage}[t]{0.16\linewidth}
        \centering
        \includegraphics[width=1\linewidth]{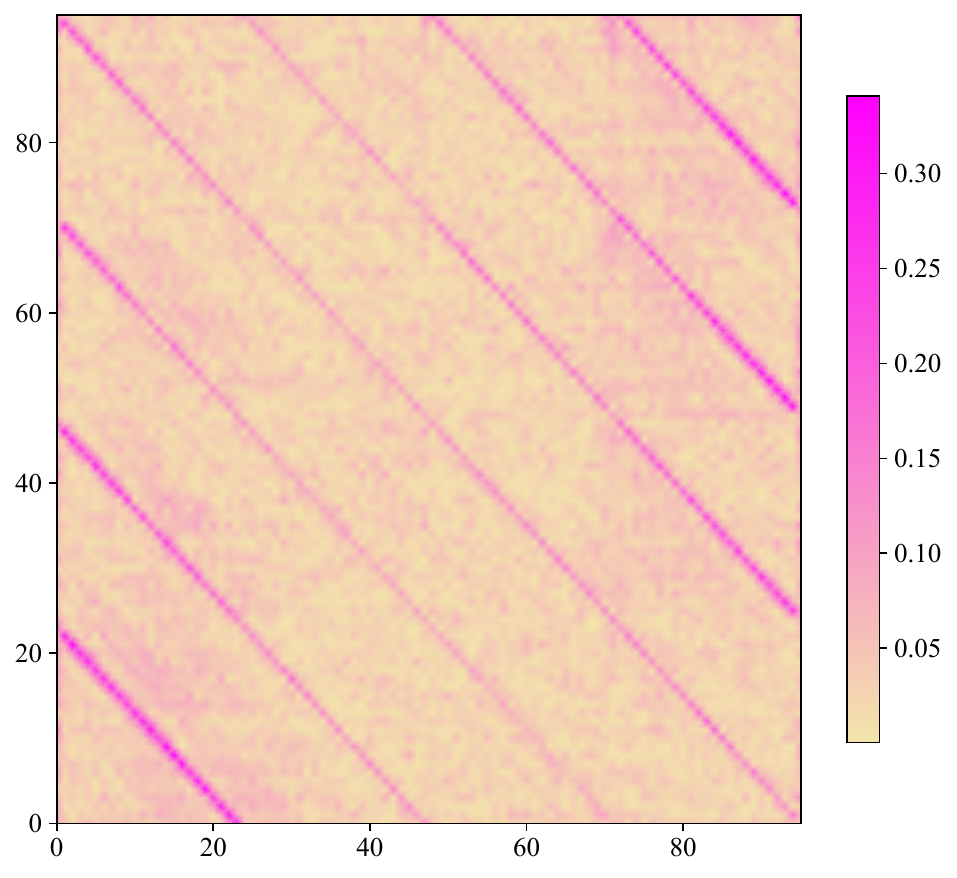}
    \end{minipage}
    \begin{minipage}[t]{0.16\linewidth}
        \centering
        \includegraphics[width=1\linewidth]{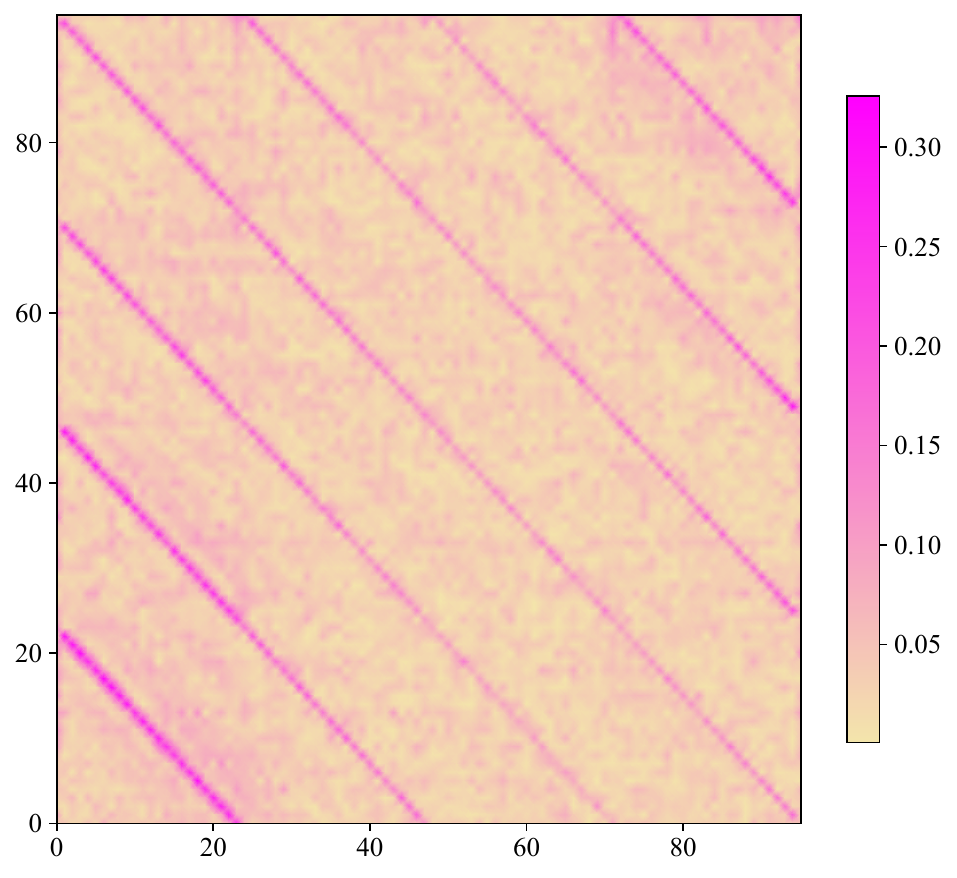}
    \end{minipage}
    \begin{minipage}[t]{0.16\linewidth}
        \centering
        \includegraphics[width=1\linewidth]{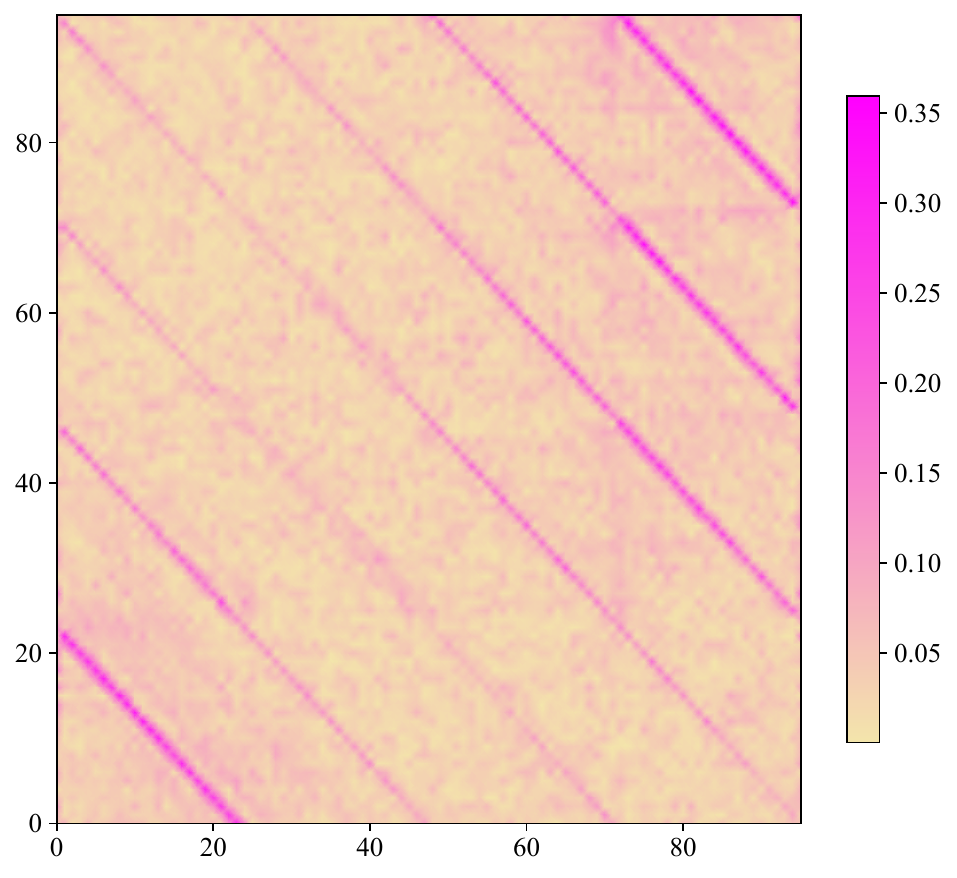}
    \end{minipage}
    \begin{minipage}[t]{0.16\linewidth}
        \centering
        \includegraphics[width=1\linewidth]{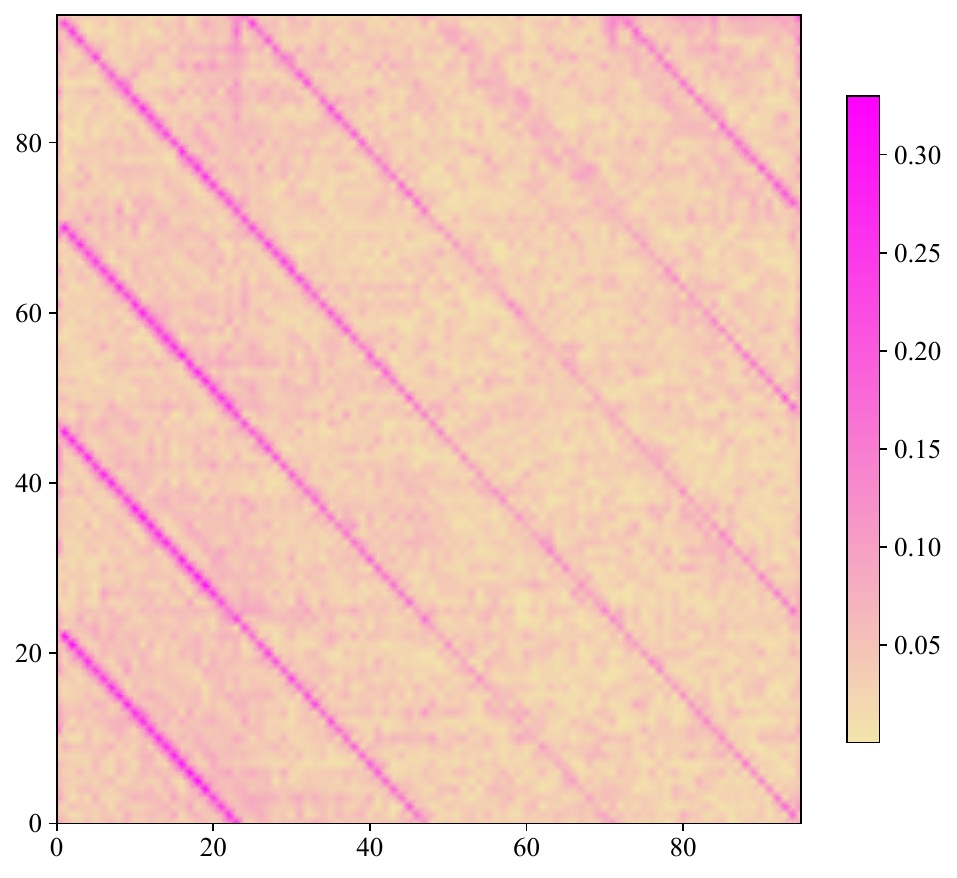}
    \end{minipage}
    \begin{minipage}[t]{0.16\linewidth}
        \centering
        \includegraphics[width=1\linewidth]{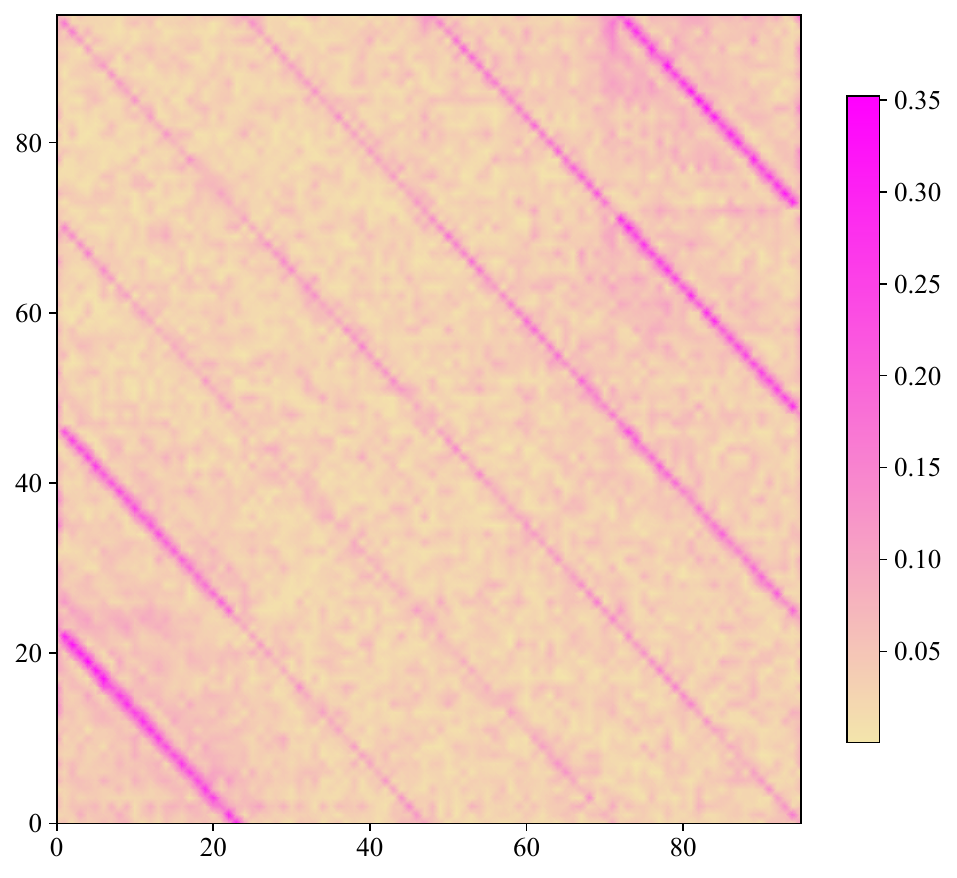}
    \end{minipage}
    
    \qquad
    
    \begin{minipage}[t]{0.16\linewidth}
        \centering
        \includegraphics[width=1\linewidth]{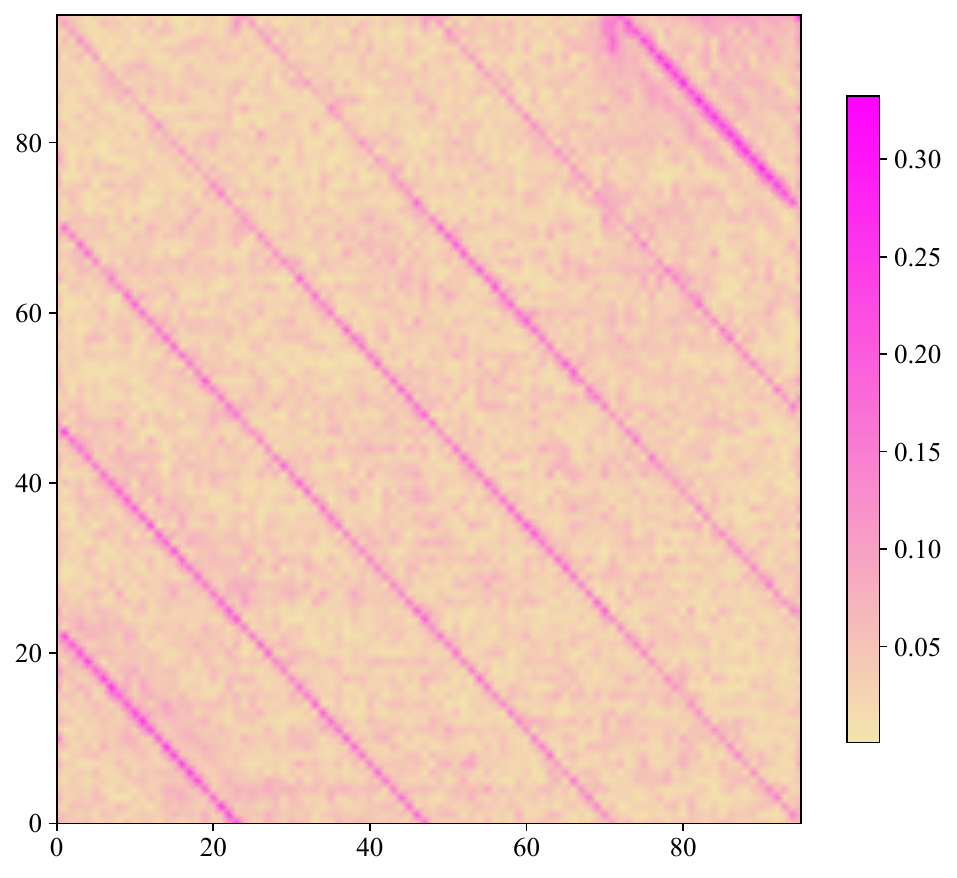}
    \end{minipage}
    \begin{minipage}[t]{0.16\linewidth}
        \centering
        \includegraphics[width=1\linewidth]{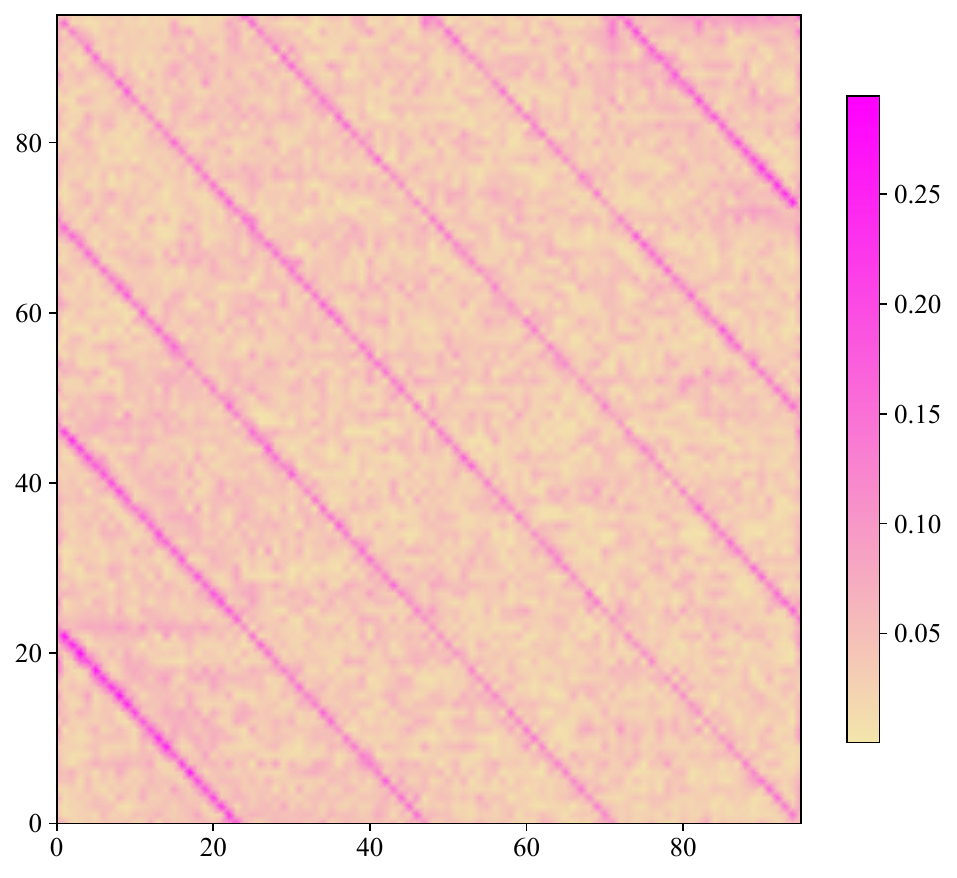}
    \end{minipage}
    \begin{minipage}[t]{0.16\linewidth}
        \centering
        \includegraphics[width=1\linewidth]{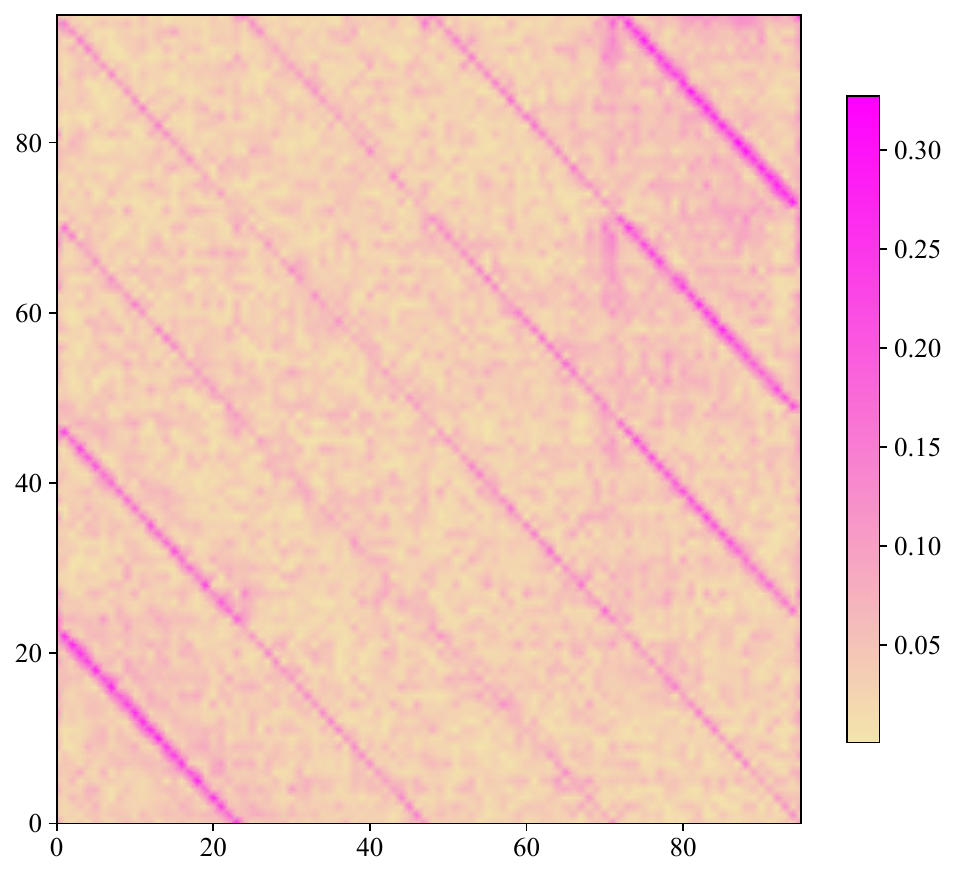}
    \end{minipage}
    \begin{minipage}[t]{0.16\linewidth}
        \centering
        \includegraphics[width=1\linewidth]{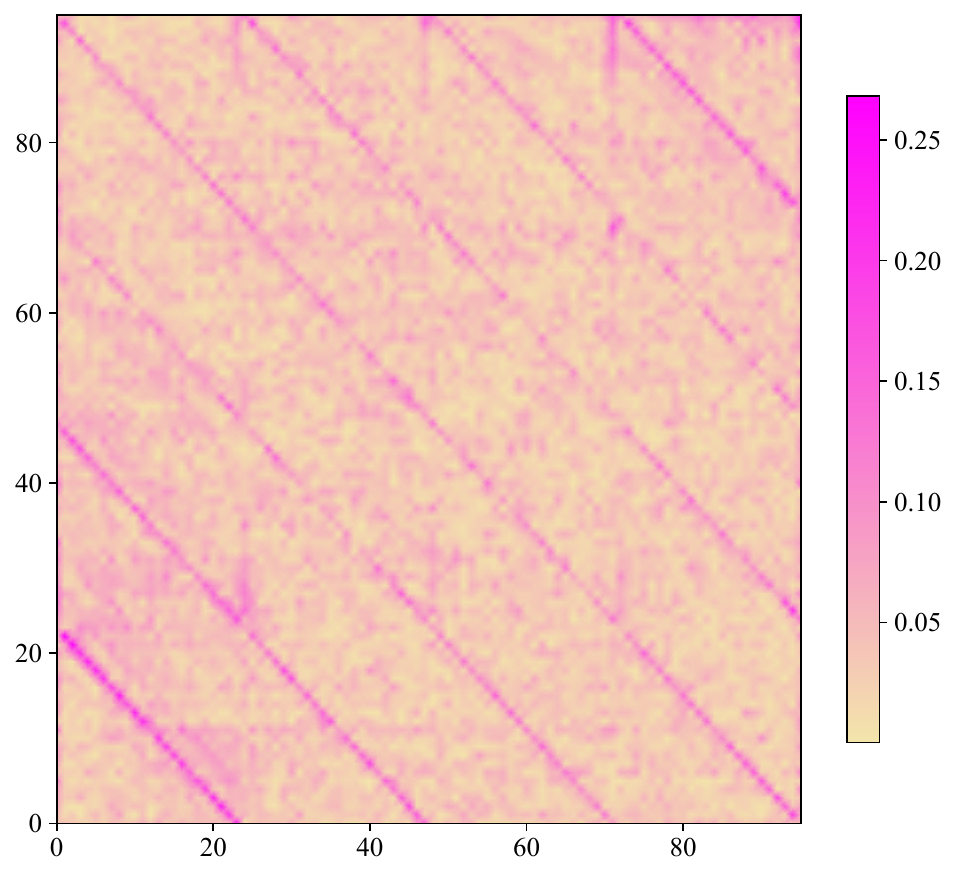}
    \end{minipage} 
     \begin{minipage}[t]{0.16\linewidth}
        \centering
        \includegraphics[width=1\linewidth]{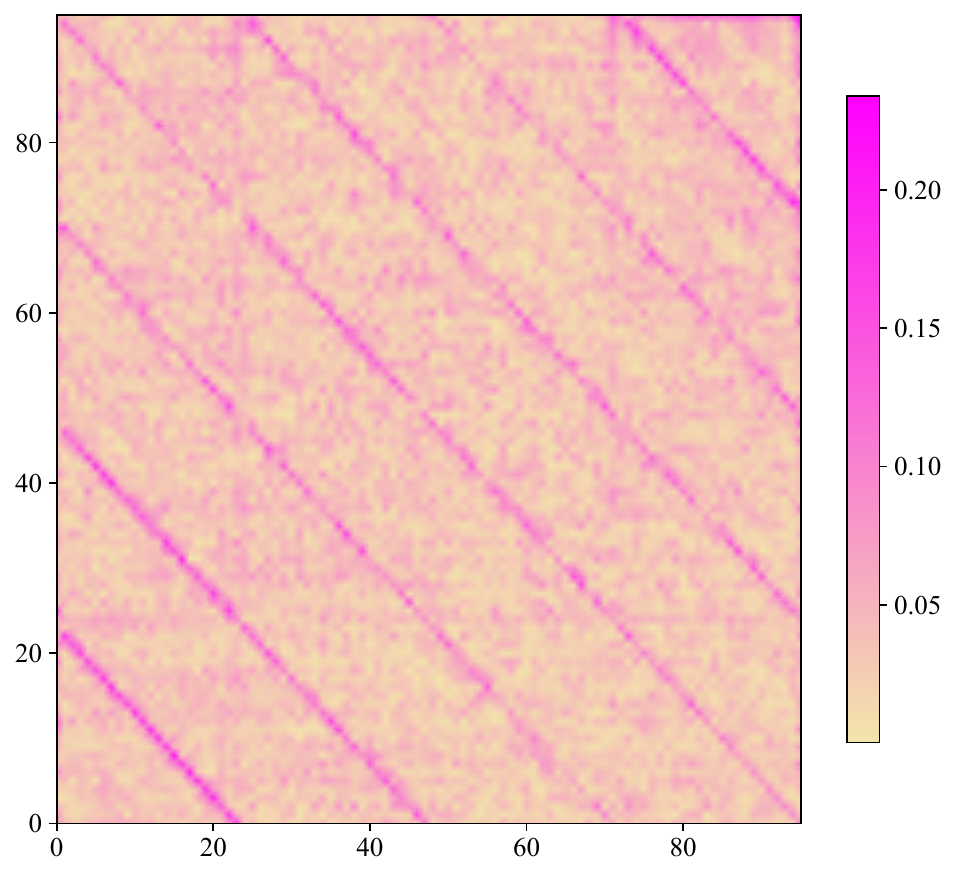}
    \end{minipage}
    \caption{Visualization of the 10 complex linear weights' modulus in CATS-Linear, from left to right. The downward-sloping lines indicate that data periodicity induces corresponding periodicity in model weights.}
    \label{heatmap-fig3}
\end{figure*}

\begin{table}[htb]\rmfamily
    \caption{Prediction errors under different $K$.}
    \label{hyperK-tab4}
    \setlength{\tabcolsep}{0.78mm}
    \begin{small}
    \centering
    \begin{tabular}{c|c|cccccccc}
        \toprule
        \multirow{2}{*}{\rotatebox{90}{Data}}
        &  & \multicolumn{2}{c}{$K$=5} & \multicolumn{2}{c}{$K$=10} & \multicolumn{2}{c}{$K$=20} & \multicolumn{2}{c}{$K$=40} \\
        &  & MSE & MAE & MSE & MAE & MSE & MAE & MSE & MAE \\

        \midrule
        \multirow{4}{*}{\rotatebox{90}{Electricity}} 
        & 96   & 0.142 & 0.236 & \textbf{0.140} & \textbf{0.234} & \textbf{0.140} & 0.235 & 0.141 & 0.235  \\
        & 192  & 0.154 & 0.247 & \textbf{0.153} & \textbf{0.247} & \textbf{0.153} & \textbf{0.247} & \textbf{0.153} & \textbf{0.247} \\
        & 336  & 0.169 & 0.263 & 0.168 & 0.262 & \textbf{0.167} & \textbf{0.261} & \textbf{0.167} & 0.262  \\
        & 720  & 0.209 & 0.295 & \textbf{0.208} & \textbf{0.294} & \textbf{0.208} & 0.296 & \textbf{0.208} & 0.296  \\
        \midrule
        \multicolumn{2}{c|}{Sum} & 0.674 & 1.041 & 0.669 & \textbf{1.037} & \textbf{0.668} & 1.039 & 0.669 & 1.040 \\
        
        \midrule
        \multirow{4}{*}{\rotatebox{90}{ETTm1}} 
         & 96  & 0.293 & 0.338 & \textbf{0.287} & \textbf{0.333} & 0.288 & 0.336 & 0.288 & 0.336  \\
        & 192  & 0.336 & 0.363 & 0.328 & 0.360 & \textbf{0.326} & \textbf{0.359} & \textbf{0.326} & \textbf{0.359}  \\
        & 336  & 0.370 & 0.384 & \textbf{0.364} & \textbf{0.381} & 0.366 & \textbf{0.381} & 0.365 & 0.382  \\
        & 720  & \textbf{0.423} & 0.416 & 0.424 & 0.416 & \textbf{0.423} & \textbf{0.415} & 0.424 & 0.416  \\
        \midrule
        \multicolumn{2}{c|}{Sum} & 1.422 & 1.501 & \textbf{1.403} & \textbf{1.490} & \textbf{1.403} & 1.491 & \textbf{1.403} & 1.493 \\ 
        \bottomrule
    \end{tabular}
    \end{small}
    \vskip -0.45in
\end{table}

\begin{table}[htb]\rmfamily
    \caption{The improvements of CACI on PatchTST and Tide. L=336 for PatchTST and L=720 for TiDE.}
    \label{Ablation-tab5}
    \setlength{\tabcolsep}{0.8mm}
    \begin{small}
    \centering
    \begin{tabular}{c|c|cccccccc}
        \toprule
        \multirow{2}{*}{\rotatebox{90}{Data}}
        &  & \multicolumn{2}{c}{PatchTST} & \multicolumn{2}{c}{+CACI} & \multicolumn{2}{c}{TiDE} & \multicolumn{2}{c}{+ CACI} \\
        & & MSE & MAE & MSE & MAE & MSE & MAE & MSE & MAE \\
        
        \midrule
        \multirow{4}{*}{\rotatebox{90}{Weather}} 
        & 96   & 0.192 & 0.231 & 0.183 & 0.224 & 0.204 & 0.263 & 0.182 & 0.251 \\
        & 192  & 0.233 & 0.262 & 0.183 & 0.269 & 0.237 & 0.295 & 0.211 & 0.278 \\
        & 336  & 0.276 & 0.294 & 0.267 & 0.291 & 0.340 & 0.265 & 0.315 & 0.251 \\
        & 720  & 0.357 & 0.351 & 0.348 & 0.338 & 0.355 & 0.396 & 0.332 & 0.378 \\
        \midrule
        \multicolumn{2}{c}{Avg.} &  0.264 &  0.285 &  0.246 &  0.281 &  0.284 &  0.305 &  0.260 &  0.289 \\

        \midrule
        \multirow{4}{*}{\rotatebox{90}{ETTh1}} 
         & 96   & 0.462 & 0.443 & 0.432 & 0.431 & 0.475 & 0.462 & 0.441 & 0.437 \\
        & 192  & 0.501 & 0.466 & 0.486 & 0.454 & 0.520 & 0.487 & 0.481 & 0.472  \\
        & 336  & 0.545 & 0.498 & 0.529 & 0.487 & 0.569 & 0.518 & 0.538 & 0.501  \\
        & 720  & 0.548 & 0.503 & 0.530 & 0.493 & 0.602 & 0.563 & 0.585 & 0.543  \\
        \midrule
        \multicolumn{2}{c}{Avg.} &  0.514 &  0.478 &  0.494 &  0.466 &  0.542 &  0.508 &  0.511 &  0.488 \\
        \bottomrule
    \end{tabular}
    \end{small}
\end{table}

In this section, we experimentally analyze the contribution of each component in CATS-Linear. The ablation study results are presented in Table \ref{Ablation-tab3}. RevIN introduces two affine transformation parameters into instance normalization. To investigate RevIN's role, we replaced it with standard instance normalization. The results indicate that RevIN provides only marginal improvements compared to instance normalization. Beyond the table, we observe that removing instance normalization entirely causes significant performance degradation, demonstrating that instance normalization—which eliminates scale differences across dimensions—is a critical step.

In CATS-Linear without TSLinear, we substitute TSLinear with a linear layer. This modification leads to a slight increase in prediction error. For CATS-Linear without CACI, we remove CACI and adopt a one-channel method, resulting in a significant error increase. Additionally, as shown in Table \ref{main-tab1}, CATS-Linear significantly outperforms the channel-independent model DLinear, further validating the efficacy of CACI.

\textbf{Hyperparameter K}. In Table \ref{hyperK-tab4}, we investigate the impact of hyperparameter $K$ on the experimental results of two datasets. The findings demonstrate that setting the category count below 10 may lead to slight performance degradation. However, once $K$ exceeds 10, even the 321-dimensional Electricity dataset achieves results comparable to those obtained with $K=20$ or $K=40$. Consequently, CACI reduces the computational complexity of Channel-Independence from $\mathcal{O}(D)$ to $\mathcal{O}(1)$ with respect to $D$. Figure \ref{heatmap-fig3} illustrates the weight values of the complex mapping for CATS-Linear when $K=10$.

In Table \ref{Ablation-tab5}, we investigate the effect of employing CACI as a channel design on the Transformer model PatchTST and the MLP model TiDE. CACI achieves MSE reductions of 7.30\% and 3.89\% for PatchTST on the Weather and ETTh1 datasets, respectively. For TiDE, the reductions are 8.45\% and 5.72\%. This suggests that CACI is a generalizable method adaptable to various network architectures.

\section{Conclusion}
This paper systematically summarizes and analyzes existing channel design methodologies, proposing the novel Classification Auxiliary Channel-Independence (CACI) framework to address their limitations. CACI not only reduces complexity but also enhances forecasting performance. Concurrently, we refine feature decomposition in DLinear and integrate it with CACI to establish the new linear model CATS-Linear. Comprehensive forecasting and ablation studies demonstrate that CATS-Linear delivers efficient, accurate, and tuning-free predictions. 

\clearpage
\clearpage

\bibliography{AS}

\begin{thebibliography}{}

\bibitem[Bach, 2024]{bach}
Bach, F. (2024).
\newblock {\em Learning theory from first principles}.
\newblock MIT press.

\bibitem[Borovykh et~al., 2017]{borovykh2017conditional}
Borovykh, A., Bohte, S., and Oosterlee, C.~W. (2017).
\newblock Conditional time series forecasting with convolutional neural networks.
\newblock {\em arXiv preprint arXiv:1703.04691}.

\bibitem[Box and Jenkins, 1968]{1968arima}
Box, G.~E. and Jenkins, G.~M. (1968).
\newblock Some recent advances in forecasting and control.
\newblock {\em Journal of the Royal Statistical Society. Series C (Applied Statistics)}, 17(2):91--109.

\bibitem[Box et~al., 2015]{box2015arima}
Box, G.~E., Jenkins, G.~M., Reinsel, G.~C., and Ljung, G.~M. (2015).
\newblock {\em Time series analysis: forecasting and control}.
\newblock John Wiley \& Sons.

\bibitem[Chen et~al., 2024]{chen24}
Chen, J., Lenssen, J.~E., Feng, A., Hu, W., Fey, M., Tassiulas, L., Leskovec, J., and Ying, R. (2024).
\newblock From similarity to superiority: Channel clustering for time series forecasting.
\newblock {\em Advances in Neural Information Processing Systems}, 37:130635--130663.

\bibitem[Cleveland, 1990]{st90}
Cleveland, S. (1990).
\newblock A seasonal-trend decomposition procedure based on loess (with discussion).
\newblock {\em Journal of Office Statistics}, 6(3).

\bibitem[Das et~al., 2023]{2023tide}
Das, A., Kong, W., Leach, A., Mathur, S.~K., Sen, R., and Yu, R. (2023).
\newblock Long-term forecasting with tide: Time-series dense encoder.
\newblock {\em Transactions on Machine Learning Research}.

\bibitem[Ekambaram et~al., 2023]{ekambaram2023tsmixer}
Ekambaram, V., Jati, A., Nguyen, N., Sinthong, P., and Kalagnanam, J. (2023).
\newblock Tsmixer: Lightweight mlp-mixer model for multivariate time series forecasting.
\newblock In {\em Proceedings of the 29th ACM SIGKDD conference on knowledge discovery and data mining}, pages 459--469.

\bibitem[Franceschi et~al., 2019]{franceschi2019unsupervised}
Franceschi, J.-Y., Dieuleveut, A., and Jaggi, M. (2019).
\newblock Unsupervised scalable representation learning for multivariate time series.
\newblock {\em Advances in neural information processing systems}, 32.

\bibitem[Gardner and Everette, 1985]{gardner1985}
Gardner, J. and Everette, S. (1985).
\newblock Exponential smoothing: The state of the art.
\newblock {\em Journal of forecasting}, 4(1):1--28.

\bibitem[Genet and Inzirillo, 2024]{genet2024temporal}
Genet, R. and Inzirillo, H. (2024).
\newblock A temporal linear network for time series forecasting.
\newblock {\em arXiv preprint arXiv:2410.21448}.

\bibitem[Graves, 2012]{graves2012lstm}
Graves, A. (2012).
\newblock Long short-term memory.
\newblock {\em Supervised sequence labelling with recurrent neural networks}, pages 37--45.

\bibitem[Han et~al., 2024]{han24}
Han, L., Ye, H.-J., and Zhan, D.-C. (2024).
\newblock The capacity and robustness trade-off: Revisiting the channel independent strategy for multivariate time series forecasting.
\newblock {\em IEEE Transactions on Knowledge and Data Engineering}, 36(11):7129--7142.

\bibitem[Holt, 2004]{HOLT04}
Holt, C.~C. (2004).
\newblock Forecasting seasonals and trends by exponentially weighted moving averages.
\newblock {\em International Journal of Forecasting}, 20(1):5--10.

\bibitem[Ilbert et~al., 2024]{ilbert2024analysing}
Ilbert, R., Tiomoko, M., Louart, C., Odonnat, A., Feofanov, V., Palpanas, T., and Redko, I. (2024).
\newblock Analysing multi-task regression via random matrix theory with application to time series forecasting.
\newblock {\em Advances in Neural Information Processing Systems}, 37:115021--115057.

\bibitem[Kim et~al., 2022]{1}
Kim, T., Kim, J., Tae, Y., Park, C., Choi, J.-H., and Choo, J. (2022).
\newblock Reversible instance normalization for accurate time series forecasting against distribution shift.
\newblock In {\em International Conference on Learning Representations}.

\bibitem[Kingma and Ba, 2014]{kingma2014adam}
Kingma, D.~P. and Ba, J. (2014).
\newblock Adam: A method for stochastic optimization.
\newblock {\em arXiv preprint arXiv:1412.6980}.

\bibitem[Lai et~al., 2018]{lai2018lstnet}
Lai, G., Chang, W.-C., Yang, Y., and Liu, H. (2018).
\newblock Modeling long-and short-term temporal patterns with deep neural networks.
\newblock In {\em The 41st international ACM SIGIR conference on research \& development in information retrieval}, pages 95--104.

\bibitem[Li et~al., 2024a]{li2024vlinear}
Li, C., Xiao, B., and Yuan, Q. (2024a).
\newblock Vlinear: Enhanced linear complexity time series forecasting model.
\newblock {\em Intelligent Data Analysis}, page 1088467X241303376.

\bibitem[Li et~al., 2019]{li2019enhancing}
Li, S., Jin, X., Xuan, Y., Zhou, X., Chen, W., Wang, Y.-X., and Yan, X. (2019).
\newblock Enhancing the locality and breaking the memory bottleneck of transformer on time series forecasting.
\newblock {\em Advances in neural information processing systems}, 32.

\bibitem[Li et~al., 2024b]{li24}
Li, Z., Qi, S., Li, Y., and Xu, Z. (2024b).
\newblock Revisiting long-term time series forecasting: An investigation on affine mapping.

\bibitem[Li et~al., 2023]{23mts}
Li, Z., Rao, Z., Pan, L., and Xu, Z. (2023).
\newblock Mts-mixers: Multivariate time series forecasting via factorized temporal and channel mixing.
\newblock {\em arXiv preprint arXiv:2302.04501}.

\bibitem[Liu et~al., 2022]{liu2022pyraformer}
Liu, S., Yu, H., Liao, C., Li, J., Lin, W., Liu, A.~X., and Dustdar, S. (2022).
\newblock Pyraformer: Low-complexity pyramidal attention for long-range time series modeling and forecasting.
\newblock In {\em \# PLACEHOLDER\_PARENT\_METADATA\_VALUE\#}.

\bibitem[Liu et~al., 2024]{liu24}
Liu, Y., Hu, T., Zhang, H., Wu, H., Wang, S., Ma, L., and Long, M. (2024).
\newblock itransformer: Inverted transformers are effective for time series forecasting.
\newblock In {\em The Twelfth International Conference on Learning Representations}.

\bibitem[Liu et~al., 2023]{liu2023koopa}
Liu, Y., Li, C., Wang, J., and Long, M. (2023).
\newblock Koopa: Learning non-stationary time series dynamics with koopman predictors.
\newblock {\em Advances in neural information processing systems}, 36:12271--12290.

\bibitem[Luo and Wang, 2024]{luo2024moderntcn}
Luo, D. and Wang, X. (2024).
\newblock Moderntcn: A modern pure convolution structure for general time series analysis.
\newblock In {\em The twelfth international conference on learning representations}, pages 1--43.

\bibitem[Nie et~al., 2023]{27}
Nie, Y., Nguyen, N.~H., Sinthong, P., and Kalagnanam, J. (2023).
\newblock A time series is worth 64 words: Long-term forecasting with transformers.
\newblock In {\em The Eleventh International Conference on Learning Representations}.

\bibitem[Oreshkin et~al., 2020]{nbeats20}
Oreshkin, B.~N., Carpov, D., Chapados, N., and Bengio, Y. (2020).
\newblock N-beats: Neural basis expansion analysis for interpretable time series forecasting.
\newblock In {\em International Conference on Learning Representations}.

\bibitem[Rizvi et~al., 2025]{rizvi2025bridging}
Rizvi, S. T.~H., Kanwal, N., Naeem, M., Cuzzocrea, A., and Coronato, A. (2025).
\newblock Bridging simplicity and sophistication using glinear: A novel architecture for enhanced time series prediction.
\newblock {\em arXiv preprint arXiv:2501.01087}.

\bibitem[Smyl, 2020]{SMYLS2020}
Smyl, S. (2020).
\newblock A hybrid method of exponential smoothing and recurrent neural networks for time series forecasting.
\newblock {\em International Journal of Forecasting}, 36(1):75--85.

\bibitem[Tan et~al., 2024]{tan2024language}
Tan, M., Merrill, M., Gupta, V., Althoff, T., and Hartvigsen, T. (2024).
\newblock Are language models actually useful for time series forecasting?
\newblock {\em Advances in Neural Information Processing Systems}, 37:60162--60191.

\bibitem[Toner and Darlow, 2024]{Toner24}
Toner, W. and Darlow, L.~N. (2024).
\newblock An analysis of linear time series forecasting models.
\newblock In {\em Forty-first International Conference on Machine Learning}, pages 48404--48427.

\bibitem[Wang et~al., 2023]{23micn}
Wang, H., Peng, J., Huang, F., Wang, J., Chen, J., and Xiao, Y. (2023).
\newblock Micn: Multi-scale local and global context modeling for long-term series forecasting.
\newblock In {\em The eleventh international conference on learning representations}.

\bibitem[Wang et~al., 2025a]{wang2025clinear}
Wang, J., Su, X., Huang, Y., Lai, H., Qian, W., and Zhang, S. (2025a).
\newblock Clinear: An interpretable deep time series forecasting model for periodic time series.
\newblock {\em IEEE Internet of Things Journal}.

\bibitem[Wang et~al., 2025b]{25timemixer2}
Wang, S., LI, J., Shi, X., Ye, Z., Mo, B., Lin, W., Shengtong, J., Chu, Z., and Jin, M. (2025b).
\newblock Timemixer++: A general time series pattern machine for universal predictive analysis.
\newblock In {\em The Thirteenth International Conference on Learning Representations}.

\bibitem[Wang et~al., 2024a]{23timemixer}
Wang, S., Wu, H., Shi, X., Hu, T., Luo, H., Ma, L., Zhang, J.~Y., and ZHOU, J. (2024a).
\newblock Timemixer: Decomposable multiscale mixing for time series forecasting.
\newblock In {\em International Conference on Learning Representations}.

\bibitem[Wang et~al., 2024b]{wangcard}
Wang, X., Zhou, T., Wen, Q., Gao, J., Ding, B., and Jin, R. (2024b).
\newblock Card: Channel aligned robust blend transformer for time series forecasting.
\newblock In {\em The Twelfth International Conference on Learning Representations}.

\bibitem[Woo et~al., 2022]{wo22}
Woo, G., Liu, C., Sahoo, D., Kumar, A., and Hoi, S. (2022).
\newblock Etsformer: Exponential smoothing transformers for time-series forecasting.
\newblock {\em arXiv preprint arXiv:2202.01381}.

\bibitem[Wu et~al., 2023]{23timesnet}
Wu, H., Hu, T., Liu, Y., Zhou, H., Wang, J., and Long, M. (2023).
\newblock Timesnet: Temporal 2d-variation modeling for general time series analysis.
\newblock In {\em The Eleventh International Conference on Learning Representations}.

\bibitem[Wu et~al., 2021]{23}
Wu, H., Xu, J., Wang, J., and Long, M. (2021).
\newblock Autoformer: Decomposition transformers with auto-correlation for long-term series forecasting.
\newblock In {\em Advances in Neural Information Processing Systems}, pages 22419--22430.

\bibitem[Xu et~al., 2024]{xu24}
Xu, Z., Zeng, A., and Xu, Q. (2024).
\newblock {FITS}: Modeling time series with 10k parameters.
\newblock In {\em The Twelfth International Conference on Learning Representations}.

\bibitem[Yi et~al., 2024]{yi2024filternet}
Yi, K., Fei, J., Zhang, Q., He, H., Hao, S., Lian, D., and Fan, W. (2024).
\newblock Filternet: Harnessing frequency filters for time series forecasting.
\newblock {\em Advances in Neural Information Processing Systems}, 37:55115--55140.

\bibitem[Yue et~al., 2025]{olinear2025}
Yue, W., Liu, Y., Li, H., Wang, H., Ying, X., Guo, R., Xing, B., and Shi, J. (2025).
\newblock Olinear: A linear model for time series forecasting in orthogonally transformed domain.
\newblock {\em arXiv preprint arXiv:2505.08550}.

\bibitem[Zeng et~al., 2023]{16}
Zeng, A., Chen, M., Zhang, L., and Xu, Q. (2023).
\newblock Are transformers effective for time series forecasting?
\newblock In {\em Proceedings of the AAAI Conference on Artificial Intelligence}, pages 11121--11128.

\bibitem[Zhang and Yan, 2023]{23crossformer}
Zhang, Y. and Yan, J. (2023).
\newblock Crossformer: Transformer utilizing cross-dimension dependency for multivariate time series forecasting.
\newblock In {\em The eleventh international conference on learning representations}.

\bibitem[Zhou et~al., 2021]{18}
Zhou, H., Zhang, S., Peng, J., Zhang, S., Li, J., Xiong, H., and Zhang, W. (2021).
\newblock Informer: Beyond efficient transformer for long sequence time-series forecasting.
\newblock In {\em Proceedings of the AAAI Conference on Artificial Intelligence}, pages 11106--11115.

\bibitem[Zhou et~al., 2022]{zhou22}
Zhou, T., MA, Z., Wang, X., Wen, Q., Sun, L., Yao, T., Yin, W., and Jin, R. (2022).
\newblock Film: Frequency improved legendre memory model for long-term time series forecasting.
\newblock In {\em Advances in Neural Information Processing Systems}, volume~35, pages 12677--12690. Curran Associates, Inc.

\end{thebibliography}

\end{document}


%
\runningtitle{I use this title instead because the last one was very long}

%

\onecolumn
\aistatstitle{Instructions for Paper Submissions to AISTATS 2026: \\
Supplementary Materials}

\section{FORMATTING INSTRUCTIONS}

To prepare a supplementary pdf file, we ask the authors to use \texttt{aistats2026.sty} as a style file and to follow the same formatting instructions as in the main paper.
The only difference is that the supplementary material must be in a \emph{single-column} format.
You can use \texttt{supplement.tex} in our starter pack as a starting point, or append the supplementary content to the main paper and split the final PDF into two separate files.

Note that reviewers are under no obligation to examine your supplementary material.

\section{MISSING PROOFS}

The supplementary materials may contain detailed proofs of the results that are missing in the main paper.

\subsection{Proof of Lemma 3}

\textit{In this section, we present the detailed proof of Lemma 3 and then [ ... ]}

\section{ADDITIONAL EXPERIMENTS}

If you have additional experimental results, you may include them in the supplementary materials.

\subsection{Effect of the Regularization Parameter}

\textit{Our algorithm depends on the regularization parameter $\lambda$. Figure 1 below illustrates the effect of this parameter on the performance of our algorithm. As we can see, [ ... ]}

\vfill